\documentclass[letterpaper]{article} 
\usepackage{meta/aaai2026}  
\usepackage{times}  
\usepackage{helvet}  
\usepackage{courier}  
\usepackage[hyphens]{url}  
\usepackage{graphicx} 
\usepackage{natbib}  
\usepackage{caption} 
\usepackage{algorithm}
\usepackage{algorithmic}

\usepackage{newfloat}
\usepackage{listings}
\DeclareCaptionStyle{ruled}{labelfont=normalfont,labelsep=colon,strut=off} 
\floatstyle{ruled}
\newfloat{listing}{tb}{lst}{}
\floatname{listing}{Listing}
\usepackage{graphicx}	
\usepackage{times}
\usepackage{caption}
\usepackage{placeins}
\usepackage{color, colortbl}
\usepackage{enumitem}
\usepackage{tabularx}
\usepackage{xstring}
\usepackage{multirow}
\usepackage{xspace}

\usepackage{booktabs}
\usepackage{amsmath,amssymb,amsfonts,amsthm}
\usepackage{array}
\usepackage{nicematrix}

\usepackage{tipa}
\usepackage{dsfont}
\usepackage{etoolbox}  

\graphicspath{sections/figs}

\definecolor{mygray}{RGB}{234,234,234}

\newcommand{\ra}[1]{\renewcommand{\arraystretch}{#1}}

\newcommand{\Ptr}{P_\mathsf{tr}}
\newcommand{\Pte}{P_\mathsf{te}}

\newcommand{\yh}{\hat{y}}

\DeclareMathOperator*{\argmin}{arg\,min}
\DeclareMathOperator*{\argmax}{arg\,max}

\newcommand{\ie}{\textit{i}.\textit{e}.}
\newcommand{\eg}{\textit{e}.\textit{g}.}

\usepackage{adjustbox}
\usepackage[table,dvipsnames]{xcolor}

\definecolor{darkgreen}{rgb}{0.13, 0.55, 0.13}
\newcommand{\improve}[1]{{\textbf{+#1}}}  

\newcommand{\smallimprove}[1]{\textsuperscript{$\uparrow$#1}}
\newcommand{\smalldecrease}[1]{\textsuperscript{$\downarrow$#1}}

\newcommand{\err}[1]{\textsuperscript{$\pm$#1}}

\definecolor{cvprblue}{rgb}{0.21,0.49,0.74}
\usepackage[pagebackref,breaklinks,colorlinks,allcolors=gray]{hyperref}  
\usepackage[capitalize]{cleveref}
\crefname{section}{Sec.}{Secs.}
\crefname{table}{Tab.}{Tabs.}
\crefname{figure}{Fig.}{Figs.}

\definecolor{lightblue}{rgb}{0.85, 0.93, 1.0}
\newcommand{\rot}[1]{\rotatebox{60}{#1}}
\usepackage{bm}
\usepackage{anyfontsize}  

\title{Your AI-Generated Image Detector Can Secretly Achieve SOTA Accuracy, If~Calibrated %
}
\author{
    Muli~Yang\textsuperscript{\rm 1},
    Gabriel~James Goenawan\textsuperscript{\rm 1}, 
    Henan~Wang\textsuperscript{{\rm 2}},
    Huaiyuan~Qin\textsuperscript{\rm 1},
    Chenghao~Xu\textsuperscript{\rm 3},
    Yanhua~Yang\textsuperscript{\rm 3}\thanks{Corresponding author.},
    Fen~Fang\textsuperscript{\rm 1}, 
    Ying~Sun\textsuperscript{\rm 1}, 
    Joo-Hwee~Lim\textsuperscript{\rm 1},
    Hongyuan~Zhu\textsuperscript{\rm 1}
}
\affiliations{
    \textsuperscript{\rm 1}Institute for Infocomm Research (I\textsuperscript{2}R), A*STAR, Singapore\\
    \textsuperscript{\rm 2}Independent Researcher\\
    \textsuperscript{\rm 3}Xidian University, China 

    \{yangml, goenawan, qinhy, fangf, suny, joohwee, zhuh\}@a-star.edu.sg; \\
    nnhhwang@gmail.com; 
    \{chx@stu., yanhyang@\}xidian.edu.cn
}

\begin{document}

\maketitle

\pagestyle{plain}
\thispagestyle{plain}

\begin{abstract}
  Despite being trained on balanced datasets, existing AI-generated image detectors often exhibit systematic bias at test time, frequently misclassifying fake images as real. 
  We hypothesize that this behavior stems from distributional shift in fake samples and implicit priors learned during training. Specifically, models tend to overfit to superficial artifacts that do not generalize well across different generation methods, leading to a misaligned decision threshold when faced with test-time distribution shift.
  To address this, we propose a theoretically grounded post-hoc calibration framework based on Bayesian decision theory. In particular, we introduce a learnable scalar correction to the model's logits, optimized on a small validation set from the target distribution while keeping the backbone frozen. This parametric adjustment compensates for distributional shift in model output, realigning the decision boundary even without requiring ground-truth labels. Experiments on challenging benchmarks show that our approach significantly improves robustness without retraining, offering a lightweight and principled solution for reliable and adaptive AI-generated image detection in the open world.
\end{abstract}

\begin{links}
    \link{Code}{https://github.com/muliyangm/AIGI-Det-Calib} 
\end{links}

\section{Introduction}\label{sec:intro}

\begin{figure*}[t]
  \centering
  \includegraphics[width=0.998\linewidth]{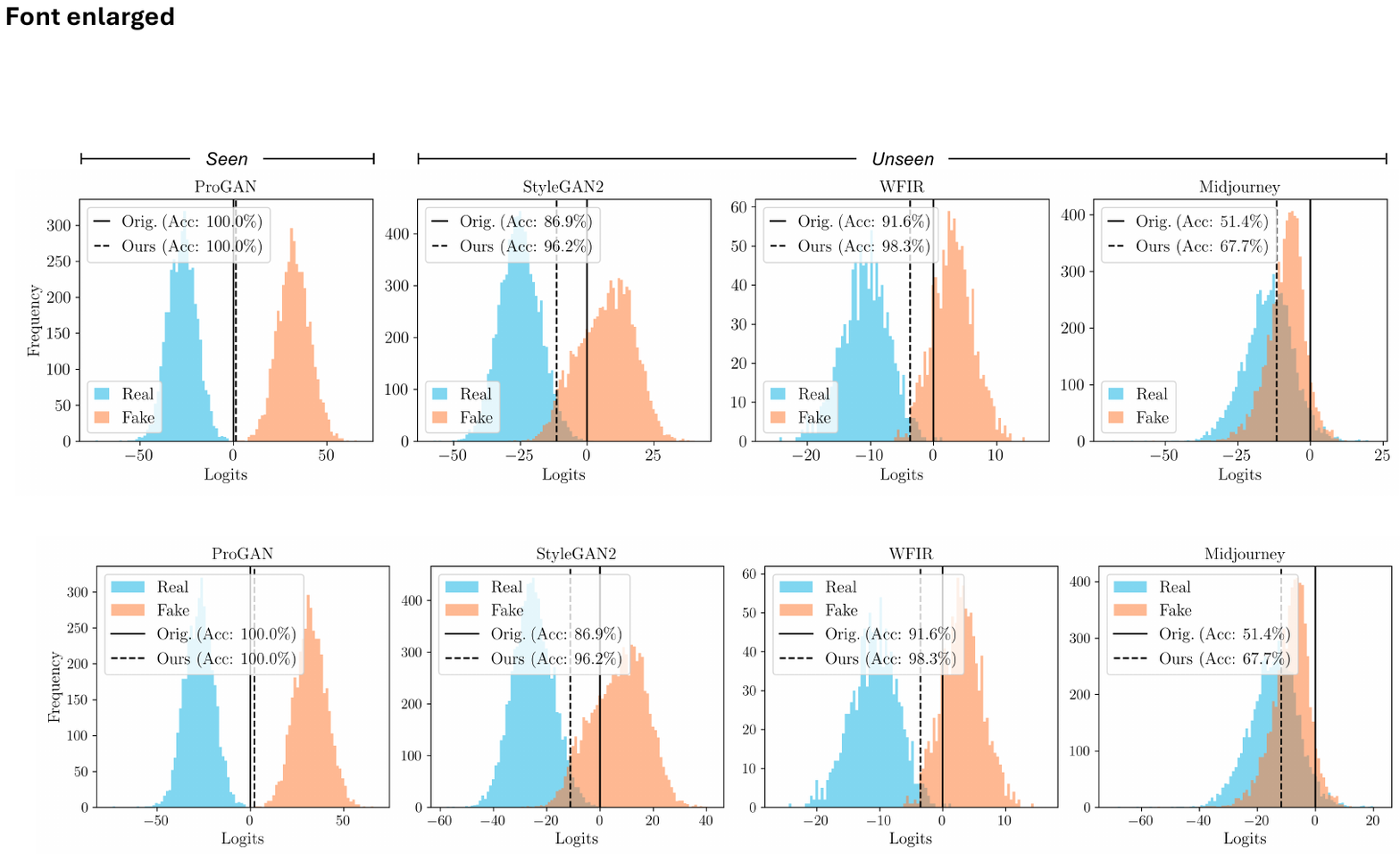}
  \caption{
 Logit distributions of a popular AI-generated image detector, CNNSpot~\cite{wang2020cnn}, pretrained on ProGAN-generated fake images and evaluated on previously unseen fake images from StyleGAN2, WhichFaceIsReal (WFIR), and Midjourney, reveal a tendency to misclassify these unfamiliar fake samples as real. Our proposed calibration method significantly enhances detection accuracy by adaptively shifting the decision boundary to better align with the skewed data distribution.
  }
  \label{fig:intro}
\end{figure*}

The rapid progress of AI-driven generative models has enabled the creation of highly realistic synthetic images.  Modern techniques, from generative adversarial networks (GANs) to diffusion models, now produce photographs and artwork that are often indistinguishable from real images. 
While these technologies empower creativity in fields such as art, design, and media production, they unintentionally introduce pressing challenges around authenticity, trust, and security. As synthetic media becomes increasingly accessible and indistinguishable from real content, developing reliable methods to detect AI-generated images is not only vital for digital forensics and intellectual property protection, but also foundational to maintaining information integrity in the age of generative AI~\cite{mahara2025methods}.

Most existing methods are developed and evaluated under constrained conditions, often assuming that test data shares the same distribution as the training set~\cite{zhu2024open,yang2025detecting}. In practice, however, detectors trained on forgeries from a specific generative model tend to overfit to superficial artifacts and fail to generalize when exposed to out-of-distribution (OOD) samples, such as those generated by novel architectures or exhibiting unseen statistical properties. A growing body of work has demonstrated that even state-of-the-art detectors, which perform nearly perfectly in-distribution, suffer dramatic performance degradation under distribution shift~\cite{nadimpalli2022improving,xiao2025generalizable,yan2025aide}, underscoring the brittleness of existing AI-generated image detectors in realistic and evolving generative environments.

In our empirical analysis, we identify a consistent failure pattern: even under class-balanced settings, models are significantly more likely to misclassify fake images as real. 
As illustrated in \cref{fig:intro}, we observe that the logits output by detectors on fake images exhibit a global shift, such that the optimal decision boundary no longer lies at zero. This deviation indicates a fundamental mismatch between the model’s learned decision threshold and the true distribution at test time, motivating a deeper investigation into its causes.
We hypothesize that this bias arises from a ``lazy'' decision mechanism developed during training~\cite{ojha2023towards,rajan2025stay}, where the model overly relies on superficial and spurious artifacts that are prevalent in seen fake samples, such as frequency noise or edge inconsistencies. When these artifacts are absent in unseen fake images, the model defaults to classifying them as real. This reliance on non-semantic fake-specific cues severely limits the model’s ability to generalize and leads to systematic under-detection of fakes at test time.
Further analysis reveals that this behavior can be attributed to two interacting sources of distributional shift: (a) \textit{label prior shift}, where the marginal distribution of real \textit{vs.}~fake images differs between training and testing, and (b)~\textit{class-conditional input shift}, where the distribution of inputs conditioned on the fake class changes due to new generation techniques. Notably, such shift is particularly prominent in the fake class, as different generative models exhibit coherent and systematic deviations in visual statistics, \eg, texture smoothness, spectral frequency, or semantic integrity. This causes the model’s estimated log-likelihood ratios to be consistently distorted, shifting the decision boundary towards the real class and exacerbating the tendency to misclassify fakes.

To formally model this effect, we adopt a Bayesian decision-theoretic framework and argue that, under distributional shift, the optimal classifier must adapt its decision boundary to the posterior induced by the new test distribution. Since retraining the full model is often impractical, we introduce a lightweight, theoretically grounded post-hoc calibration method: a learnable scalar bias $\alpha$ applied to the model’s output logits. This scalar globally adjusts the decision threshold, correcting for both label and input shift. Notably, $\alpha$ can be efficiently optimized using only a few unlabeled samples from the target distribution, while keeping the model backbone fixed. Under mild assumptions, this calibration approximates the Bayes-optimal classifier for the shifted distribution. As our approach requires no access to training data, loss functions, or model internals, as well as avoids retraining or additional supervision, it can be effortlessly applied to any AI-generated image detector, regardless of architecture.

In summary, our method provides a principled yet efficient way to restore model robustness under realistic distribution shift. Without modifying the original detector, our scalar calibration significantly improves performance across a wide range of generative scenarios. These results demonstrate that test-time decision bias, when properly diagnosed and corrected, can be mitigated with minimal computational and data overhead, offering a practical solution to the brittleness of modern AI-generated image detectors.

Our contributions are as follows:
\begin{itemize}
\item We identify a systematic bias of most existing AI-generated image detectors during test time, which frequently misclassify fake images as real.
\item Using Bayesian theory, we show that this systematic bias stems from the class-conditional input shift and label shift between mismatched train-test distributions.
\item Based on mild assumptions, we propose a post-hoc calibration method that optimizes a learnable scalar correction to the model's logits (while keeping the backbone frozen), largely improving most existing AI-generated image detectors' accuracy during test.
\end{itemize}

\section{Related Work}\label{sec:related}

\subsubsection{AI-Generated Image Detection.}
The rapid advancement of generative models, including GANs, VAEs, and diffusion models~\cite{ho2020denoising,xu2024rethinking}, has enabled the creation of highly photorealistic synthetic images, raising pressing concerns around misinformation and visual authenticity. This has led to an increasing demand for reliable methods to distinguish real from generated content~\cite{chen2024single,chen2024learning,nie2024detecting,zhong2025beyond,nguyen2025forensic,xiaohigh,jia2025secret,guillaro2025bias}.

A wide range of detection approaches has been proposed, leveraging pixel- or patch-level artifacts~\cite{nataraj2019detecting,chai2020makes,wang2020cnn,ju2022fusing, zhong2023patchcraft,lorenz2023detecting,tan2024npr,fupid}, 
and modeling generation-specific fingerprints through gradient features or network signatures~\cite{, marra2019gans, yu2019attributing,liu2022detecting,jeong2022fingerprintnet, tan2023learning, wang2023general,li2025revealing},
as well as utilizing reconstruction inconsistencies from pretrained generative models~\cite{wang2023dire,ricker2024aeroblade}. 
On the other hand, frequency-based analyses reveal spectral discrepancies among generated images~\cite{frank2020leveraging,dzanic2020fourier}, and 
AIDE~\cite{yan2025aide} further integrates spectral and semantic features for improved robustness.
Due to the prevalence of vision-language models (VLMs) and their wide applications~\cite{wang2023hierarchical,yang2025consistent,min2025vision}, 
cross-modal inconsistencies have also been explored using CLIP or other VLMs, 
enabling zero-shot or lightweight fake image detectors~\cite{ojha2023towards, liu2024forgery, cozzolino2024raising, koutlis2024leveraging}. 

While these approaches have demonstrated promising results, most assume static inference distributions and are vulnerable to domain shift, limiting their generalization to unseen generative models, which is the key challenge for real-world and practical deployment.

\subsubsection{Post-Hoc Calibration.}

Post-hoc calibration is an effective strategy for adapting model outputs to the test data distribution, which often differs from the training distribution. It has been extensively explored in contexts such as class-imbalanced learning~\cite{buda2018systematic,tang2020long,wu2021adversarial,hong2021disentangling}, where models tend to overfit to majority classes, and continual learning~\cite{wu2019large,hou2019learning,zhao2020maintaining}, where newer data is often favored. 
Notably, \citet{menon2020long} introduced \textit{logit adjustment}, a post-processing method that adjusts biased classifier outputs using theoretically optimal shifts based on class frequencies, offering a unifying framework for various heuristic bias-correction approaches~\cite{kang2019decoupling,ye2020identifying,islam2021class,kim2020adjusting}. 
Similar ideas have also been extended to mitigate prediction bias in foundation models~\cite{zhu2023generalized,mai2024fine}.

In this paper, we demonstrate that AI-generated image detectors exhibit significant prediction bias when the target data distribution shifts, even under class-balanced settings, which contrasts with previously studied scenarios. We provide a theoretical explanation grounded in Bayesian theories and propose a principled yet efficient {post-hoc calibration} approach that corrects for this bias using only a small number of labeled (or even unlabeled) target examples.

\section{Train-Test Misalignment in AI-Generated Image Detection}\label{sec:theory}

\newtheorem{theorem}{Theorem}
\newtheorem{lemma}{Lemma}
\newtheorem{proposition}{Proposition}

\subsection{A Probabilistic Problem Formulation}

We consider a binary classification problem for AI-generated image detection, where each input $x \in \mathcal{X}$ is assigned a binary label $y \in \{0,1\}$, with $y = 0$ denoting a real image and $y = 1$ a fake image. Let the joint distributions over data and labels in the training and test domains be denoted by $\Ptr(x, y)$ and $\Pte(x, y)$, respectively.

We assume that the classifier is trained to minimize cross-entropy under the training distribution $\Ptr$, and outputs a logit $f(x) \in \mathbb{R}$, whose sigmoid $\sigma(f(x)) = \frac{1}{1 + \exp(-f(x))}$ estimates the probability of the input being fake. The final decision is made by thresholding the probability at $\tau$, \ie, predicting $\yh = 1$ if $\sigma(f(x)) > \tau$, 
where $\tau$ denotes the decision threshold, typically set to $\tau = 0.5$ (\ie, when $f(x) = 0$).

However, in realistic open-world AI-generated image detection settings, the test distribution $\Pte$ often differs from the training distribution $\Ptr$. In particular, the fake images in the test set may be synthesized by different generative models not seen during training (\eg, training on StarGAN fakes, testing on Stable Diffusion fakes). This introduces a covariate mismatch in $P(x|y = 1)$, while $P(x|y = 0)$ (real images) remains approximately unchanged. Additionally, the class priors $\Ptr(y)$ and $\Pte(y)$ may also differ. Hence, we adopt a more expressive modeling assumption that jointly considers the following shifts:

\begin{itemize}
\item \textit{Class-Conditional Input Shift:} $\Ptr(x|y) \ne \Pte(x|y)$, especially for $y = 1$.
\item \textit{Label Prior Shift:} $\Ptr(y) \ne \Pte(y)$,
\end{itemize}

In practice, both shifts may occur simultaneously, leading to $\Ptr(x, y) \ne \Pte(x, y)$, and thus $\Pte(y|x)\neq \Ptr(y|x)$. In such cases, a model trained on $\Ptr$ may yield biased decision boundaries when evaluated on $\Pte$, as illustrated in \cref{fig:intro}.

\subsection{Default Threshold Is Not Bayes-Optimal}

We interpret the phenomenon shown in \cref{fig:intro} through the following propositions grounded in Bayesian decision theory~\cite{devroye1997probabilistic}.

\begin{proposition}[Bayes Non-optimality]\label{prop:optimal}
The default threshold $f(x) = 0$ (or $\tau = 0.5$) is not Bayes-optimal under class-conditional input shift and label prior shift.
\end{proposition}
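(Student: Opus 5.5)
The plan is to write the Bayes-optimal rule under $\Pte$ explicitly, relate the trained logit $f$ to training-distribution quantities, and show that the two thresholds differ by a generally nonzero offset. Under 0-1 loss the Bayes-optimal classifier for $\Pte$ predicts $\yh=1$ iff $\Pte(y=1|x) > \Pte(y=0|x)$. Equivalently, it predicts fake iff the test log-posterior-odds
\begin{equation*}
g^\ast(x) = \log\frac{\Pte(x|y=1)}{\Pte(x|y=0)} + \log\frac{\Pte(y=1)}{\Pte(y=0)}
\end{equation*}
is positive. Next I use the standing assumption that $f$ minimizes cross-entropy under $\Ptr$. In the population or idealized limit, the minimizer satisfies $\sigma(f(x)) = \Ptr(y=1|x)$. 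Hence
\begin{equation*}
f(x) = \log\frac{\Ptr(x|y=1)}{\Ptr(x|y=0)} + \log\frac{\Ptr(y=1)}{\Ptr(y=0)}.
\end{equation*}

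Subtracting gives the decomposition $g^\ast(x) = f(x) + \Delta_{\mathrm{prior}} + \Delta_{\mathrm{cond}}(x)$. Here
\begin{equation*}
\Delta_{\mathrm{prior}} = \log\frac{\Pte(y=1)\Ptr(y=0)}{\Pte(y=0)\Ptr(y=1)}.
\end{equation*}
Using the modeling assumption that the real-class conditional is unchanged, $\Pte(x|y=0)=\Ptr(x|y=0)$, the conditional term is $\Delta_{\mathrm{cond}}(x)=\log\frac{\Pte(x|y=1)}{\Ptr(x|y=1)}$. The rule $f(x)>0$ is therefore Bayes-optimal for $\Pte$ only if $\operatorname{sign}(f(x)) = \operatorname{sign}(f(x)+\Delta_{\mathrm{prior}}+\Delta_{\mathrm{cond}}(x))$ for $\Pte$-almost every $x$, up to ties on null sets.

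It remains to show this sign agreement fails generically when the shifts are present. Here I must be careful, since the statement says ``is not Bayes-optimal'' and taken literally that is false in degenerate cases. For example, $\Delta_{\mathrm{cond}}$ could exactly cancel $\Delta_{\mathrm{prior}}$, or the shift could occur only where $|f|$ is large. I will therefore prove it in the sense ``not Bayes-optimal in general'': the agreement condition is nongeneric, and I exhibit an explicit instance where it fails.

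\textbf{Failure of sign agreement.} If $\Delta_{\mathrm{prior}}+\Delta_{\mathrm{cond}}(x)\neq 0$ on a set of positive $\Pte$-mass intersecting the region $\{0<|f(x)|<|\Delta_{\mathrm{prior}}+\Delta_{\mathrm{cond}}(x)|\}$, the two rules disagree there. Bayes-optimality is then lost with strictly positive excess risk: at each such point the excess risk is $|2\Pte(y=1|x)-1|>0$.

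\textbf{Explicit instance.} I take one-dimensional Gaussians: real $\mathcal N(0,1)$ in both domains, and train fake $\mathcal N(\mu,1)$ with balanced priors. Then $f(x)=\mu x-\mu^2/2$ has threshold $\mu/2$. For test fakes $\mathcal N(\mu',1)$ with $\mu'<\mu$, modeling less-familiar fakes that look more real, the optimal threshold is $\mu'/2\neq\mu/2$. The default rule thus misclassifies fakes in $(\mu'/2,\mu/2)$, which matches \cref{fig:intro}. A prior change alone similarly shifts the threshold by $-\Delta_{\mathrm{prior}}/\mu$.

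The main obstacle is exactly this measure-theoretic genericity caveat. I would state explicitly that the proposition means the default threshold fails to be Bayes-optimal except in the degenerate cancellation case, and back this with the Gaussian counterexample.
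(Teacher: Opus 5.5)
Your decomposition $g^\ast(x)=f(x)+\Delta_{\mathrm{prior}}+\Delta_{\mathrm{cond}}(x)$ is the same as the paper's. The paper also identifies $f$ with the training log-posterior-odds, writes the test log-odds as $f(x)+\Delta(x)$ with $\Delta=\Delta_{\mathrm{prior}}+\Delta_{\mathrm{cond}}$, and uses $\Pte(x|0)=\Ptr(x|0)$ to reduce $\Delta$ to a fake-class likelihood ratio plus a prior term. The difference is in the final step. The paper concludes that unless $\Delta(x)=0$ for all $x$, the Bayes boundary is not $\{f=0\}$. You instead use the correct criterion: $f>0$ is Bayes-optimal for $\Pte$ if and only if $\operatorname{sign} f(x)=\operatorname{sign}(f(x)+\Delta(x))$ for $\Pte$-almost every $x$ with $\Pte(y=1|x)\neq 1/2$. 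This condition is strictly weaker than $\Delta\equiv 0$. Your own Gaussian family shows the paper's shortcut really fails. Take $\mu>0$, $0<\mu'<\mu$ and $\Delta_{\mathrm{prior}}=\mu'(\mu'-\mu)/2$ (fewer test fakes). Then $f(x)=\mu(x-\mu/2)$ and $f(x)+\Delta(x)=\mu'(x-\mu/2)$, so the default threshold is Bayes-optimal even though both shifts are present and $\Delta\not\equiv 0$. Your route therefore proves the proposition in the only sense in which it holds: non-optimality in general, witnessed by an explicit instance. The paper's argument is shorter, but its last sentence is overstated.

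Two small repairs are needed. First, your disagreement region $\{0<|f(x)|<|\Delta(x)|\}$ omits the sign condition: with $f(x)=1$ and $\Delta(x)=2$, both rules predict fake. Off the sets where $f$ or $f+\Delta$ vanishes, the rules disagree exactly where $f(x)\,(f(x)+\Delta(x))<0$, which additionally requires $f(x)\Delta(x)<0$. Non-optimality is equivalent to that set having positive $\Pte$-mass. Second, in the Gaussian instance, state $\mu>0$ and $0<\mu'<\mu$ explicitly. For $\mu'=0$ with balanced test priors, every rule is Bayes-optimal. For $\mu'<0$, the optimal rule predicts fake below $\mu'/2$ rather than above it.
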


\begin{proof}
By Bayes' theorem, the posterior under the training distribution is
$
\Ptr(y = 1 | x) = \frac{\Ptr(x \mid y = 1) \Ptr(y = 1)}{\Ptr(x)},
$
which gives the model output logit as
\begin{equation}
f(x) = \log\hspace{-2pt} \frac{\Ptr(y \hspace{-2pt}=\hspace{-2pt} 1 | x)}{\Ptr(y \hspace{-2pt}=\hspace{-2pt} 0 | x)} = \log\hspace{-2pt} \frac{\Ptr(x | y \hspace{-2pt}=\hspace{-2pt} 1) \Ptr(y \hspace{-2pt}=\hspace{-2pt} 1)}{\Ptr(x | y \hspace{-2pt}=\hspace{-2pt} 0) \Ptr(y \hspace{-2pt}=\hspace{-2pt} 0)}\,.
\end{equation}
Under the testing distribution $\Pte$, the Bayes-optimal classifier predicts $y = 1$ if
\begin{equation}
\begin{aligned}
\Pte(y=1|x)>\Pte(y=0|x)\,&\Longleftrightarrow\,\\
\log \frac{\Pte(x|1)}{\Pte(x|0)} &+ \log \frac{\Pte(1)}{\Pte(0)} > 0\,.
\end{aligned}
\end{equation}
The Bayes-optimal decision boundary satisfies
\begin{equation}\label{eq:decision_boundary}
\begin{aligned}
\log\frac{\Pte(x|1)}{\Pte(x|0)}+\log\frac{\Pte(1)}{\Pte(0)} &=\\
f(x)\hspace{-2pt}+\hspace{-2pt}\underbrace{\log\hspace{-2pt}\frac{\Pte(x|1)/\Ptr(x|1)}{\Pte(x|0)/\Ptr(x|0)}
\hspace{-2pt}+\hspace{-2pt}\log\hspace{-2pt}\frac{\Pte(1)/\Ptr(1)}{\Pte(0)/\Ptr(0)}}_{\Delta(x)}&=0\,.
\end{aligned}
\end{equation}
\cref{eq:decision_boundary} implies that the mismatch between $\Ptr$ and $\Pte$ causes the decision boundaries to differ by $\Delta(x)$:
\begin{align}\label{eq:delta}
\Delta(x) \doteq \log \frac{\Pte(x|1)}{\Ptr(x|1)} + \log \frac{\Pte(1)(1-\Ptr(1))}{\Ptr(1)(1-\Pte(1))}\,,
\end{align}
where we assume that the real image distribution remains stable between training and testing, \ie, $\Pte(x|0) \doteq \Ptr(x|0)$, and that the shift mainly occurs in the fake class. 

In \cref{eq:delta}, unless $\Delta(x) = 0$ holds for all $x$, which would require both covariate and label distributions to align, the Bayes-optimal decision boundary does not correspond to $f(x) = 0$ or $\tau(f(x)) = 0.5$. As a result, using $f(x) = 0$ as the default threshold leads to suboptimal decisions under distribution shift.
\end{proof}

\subsection{A Scalar Value Can Correct for the Threshold}

Let us analyze the behavior of $\Delta(x)$ over $x \sim \Pte(x|1)$, \ie, test samples that are truly fake.

\paragraph{Assumption 1 ({\normalfont Systematic Conditional Shift}).}
\textit{Let the log-likelihood ratio between test and train fake distributions be approximately constant:}
\begin{equation}
\log \frac{\Pte(x|1)}{\Ptr(x|1)} \doteq c, \quad \forall x \sim \Pte(x|1)\,.
\end{equation}

\noindent
This assumption is justified by the fact that fake images generated by different GANs or diffusion models tend to exhibit coherent and systematic deviations in visual features (\eg, frequency artifacts, textures)~\cite{ricker2022towards,you2025images}. 
As a result, the likelihood under the training fake distribution is consistently misaligned, causing the classifier to under- or over-estimate $f(x)$ in a fixed direction.

\paragraph{Assumption 2 ({\normalfont Consistent Prior Shift}).}
\textit{The prior over fake samples in the test set is shifted by a constant factor:}
\begin{equation}
\log \frac{\Pte(1)}{\Ptr(1)} = \delta\,.
\end{equation}

\noindent
This assumption is well justified by \citet{menon2020long},
and the rightmost term in \cref{eq:delta} can be written as $\delta'$, which is another constant that can be easily derived from $\delta$.

\begin{proposition}[Scalar Correction]\label{prop:scalar}
The suboptimal threshold can be corrected using a global additive scalar $\tilde{\alpha}$.
\end{proposition}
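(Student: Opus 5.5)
The plan is to start from the decomposition in the proof of \cref{prop:optimal}. There, the Bayes-optimal decision rule under $\Pte$ is written as $f(x)+\Delta(x)>0$, with $\Delta(x)$ given in \cref{eq:delta}. We keep the standing simplification $\Pte(x|0)\doteq\Ptr(x|0)$, so only the fake-class likelihood ratio and the prior ratio enter $\Delta(x)$. The goal is to show that, under Assumptions~1 and~2, $\Delta(x)$ reduces to a quantity $\tilde{\alpha}$ that does not depend on $x$. The rule $f(x)+\tilde{\alpha}>0$ is then the Bayes-optimal rule under $\Pte$. Equivalently, thresholding $\sigma(f(x)+\tilde{\alpha})$ at $0.5$, i.e.\ $\sigma(f(x))$ at $\tau=\sigma(-\tilde{\alpha})$, recovers the Bayes decision.

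\textbf{Step 1: the label-shift term.} By Assumption~2, $\Pte(1)=e^{\delta}\Ptr(1)$. Substituting this into the second term of \cref{eq:delta} gives
\begin{equation*}
\delta'=\delta+\log\frac{1-\Ptr(1)}{1-e^{\delta}\Ptr(1)},
\end{equation*}
which is a constant independent of $x$, as the paper already notes. Only well-definedness needs checking, namely $e^{\delta}\Ptr(1)<1$, and this holds because $\Pte(1)<1$.

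\textbf{Step 2: the covariate-shift term.} By Assumption~1, $\log\frac{\Pte(x|1)}{\Ptr(x|1)}\doteq c$ for $x$ in the support of $\Pte(x|1)$. Hence on that region $\Delta(x)\doteq c+\delta'=:\tilde{\alpha}$, and the decision boundary of \cref{eq:decision_boundary} becomes $f(x)+\tilde{\alpha}=0$, a pure translation of the logit. Since $f$ is fixed and $\tilde{\alpha}$ is a scalar, the calibrated classifier $\yh=\mathds{1}[f(x)+\tilde{\alpha}>0]$ matches the Bayes classifier for $\Pte$.

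\textbf{Main obstacle.} Assumption~1 is stated only for $x\sim\Pte(x|1)$, while the decision rule must also be correct on real test samples, where the likelihood ratio could differ. I would handle this in two ways. First, note that the constant-ratio assumption can hold exactly on the whole space only when $c=0$, since both densities integrate to one; so it must be read as an approximation. Second, argue that what matters for correctness is the sign of $f(x)+\Delta(x)$, so it suffices that $\Delta(x)\approx\tilde{\alpha}$ on the region near the decision boundary. For real images, the training and test fake densities are both small relative to $\Pte(x|0)$, so any deviation of $\Delta(x)$ from $\tilde{\alpha}$ does not flip the sign of a strongly negative $f(x)$. I would state this as the regime in which the ``$\doteq$'' of the proposition is justified.

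Finally, I would remark that $\tilde{\alpha}$ is unknown because $c$ and $\delta$ are unknown. This motivates estimating it as the single parameter of a post-hoc correction, fitted on a small target-domain sample, rather than computing it in closed form.
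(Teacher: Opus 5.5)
Your proposal is correct and follows the paper's argument. Assumptions~1 and~2 collapse $\Delta(x)$ to the constant $c+\delta'$ on the test-fake support, so the Bayes rule becomes a pure translation of the logit; your $\tilde{\alpha}=c+\delta'$ with rule $f(x)+\tilde{\alpha}>0$ is just the negative of the paper's $\tilde{\alpha}=-(c+\delta')$ with $\tilde{f}(x)=f(x)-\tilde{\alpha}$, which is the convention matching the threshold $\alpha$ in \cref{sec:method}. Your additions are sound refinements the paper's proof leaves implicit: the explicit formula for $\delta'$; the observation that Assumption~1 cannot hold on all of $\mathcal{X}$ unless $c=0$ (restricted to the support of $\Pte(x|1)$ it can, but then necessarily $c\geq 0$); and the margin condition $f(x)<-\tilde{\alpha}$ needed on real samples.
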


\begin{proof}
Under Assumptions 1 and 2, the model output bias $\Delta(x)$ in \cref{eq:delta} becomes
\begin{equation}
\Delta(x) \doteq c + \delta' = \mathrm{const}\,, \quad \forall x \sim \Pte(x|1)\,,
\end{equation}
which justifies the use of a global additive logit correction scalar,
$\tilde{\alpha} := -\Delta(x)$, 
for post-hoc calibration to correct for the biased decision boundary shown in \cref{prop:optimal}. 
The calibrated output logit is then written as
\begin{equation}\label{eq:alpha}
\tilde{f}(x) := f(x) - \tilde{\alpha}\,,\ \ \ \ \tilde{\alpha}=-(c + \delta')\,,
\end{equation}
where $\tilde{\alpha} \in \mathbb{R}$ manifests the optimal threshold adjustment value, which can be estimated using a small subset of data sampled from $\Pte$.
\end{proof}

Therefore, post-hoc calibration---namely, adapting $\tau$ to minimize expected test error---is not only justified but necessary for robust performance under distribution shift.

\section{Post-Hoc Calibration as a Remedy}\label{sec:method}

Building upon the probabilistic analysis in the previous section, we have established that two types of distribution shift between training and testing domains induce a systematic bias in the classifier's output logits. In particular, the logit function $f(x)$, learned under the training distribution $\Ptr(x, y)$, tends to systematically underestimate or overestimate the posterior probability $\Pte(y = 1 | x)$ when evaluated on the test distribution $\Pte(x, y)$. To address this issue, we propose a simple yet effective method: calibrating the logit output at test-time using a learnable additive bias $\alpha \in \mathbb{R}$.

We explore two variants of post-hoc calibration, depending on whether ground-truth labels from the test distribution are available. Both variants leverage kernel density estimation (KDE)~\cite{rosenblatt1956remarks,parzen1962estimation} as their underlying mechanism for its simplicity and effectiveness.

\subsubsection{Supervised Calibration.}

In scenarios where a small amount of labeled target data is available, we propose a supervised logit calibration method based on KDE and accuracy maximization. The core idea is to model the distribution of classifier logits for each class using KDE, and to select a calibration value that maximizes the classification accuracy on the target distribution.

Let $z=f(x) \in \mathbb{R}$ denote the real-valued logits produced by a binary classifier.
Given a small set of labeled samples from the target domain, we estimate two class-conditional densities using Gaussian kernel density estimation:
\begin{align}
p_0(z) := p(z | y = 0)\,, \quad p_1(z) := p(z | y = 1)\,.
\end{align} 
As shown in \cref{fig:kde}, the expected classification error for a given calibration value $\alpha$ is then defined as
\begin{equation}
\begin{aligned}
\mathcal{R}(\alpha) &= P(z > \alpha \mid y = 0) + P(z \leqslant \alpha \mid y = 1) \\
&=  \int_{-\infty}^\alpha p_1(z) \, dz + \int_{\infty}^\alpha p_0(z) \, dz\,.
\end{aligned}
\end{equation}
The optimal calibration is thus found by minimizing $\mathcal{R}(\alpha)$:
\begin{align}
    \alpha^\star = \arg\min_\alpha \mathcal{R}(\alpha) \,,
\end{align}
where we leave the detailed method description and optimization procedure to Appendix.

This method is both simple and effective, requiring only a handful of labeled examples and no assumptions on the parametric form of the underlying logit distributions. Its ability to adaptively model asymmetric or multimodal distributions makes it especially useful in real-world transfer and domain adaptation scenarios.
In \cref{sec:different_methods} we also compare it with several other supervised threshold selection methods.

\begin{figure}[t]
  \centering
  \includegraphics[width=0.96\linewidth]{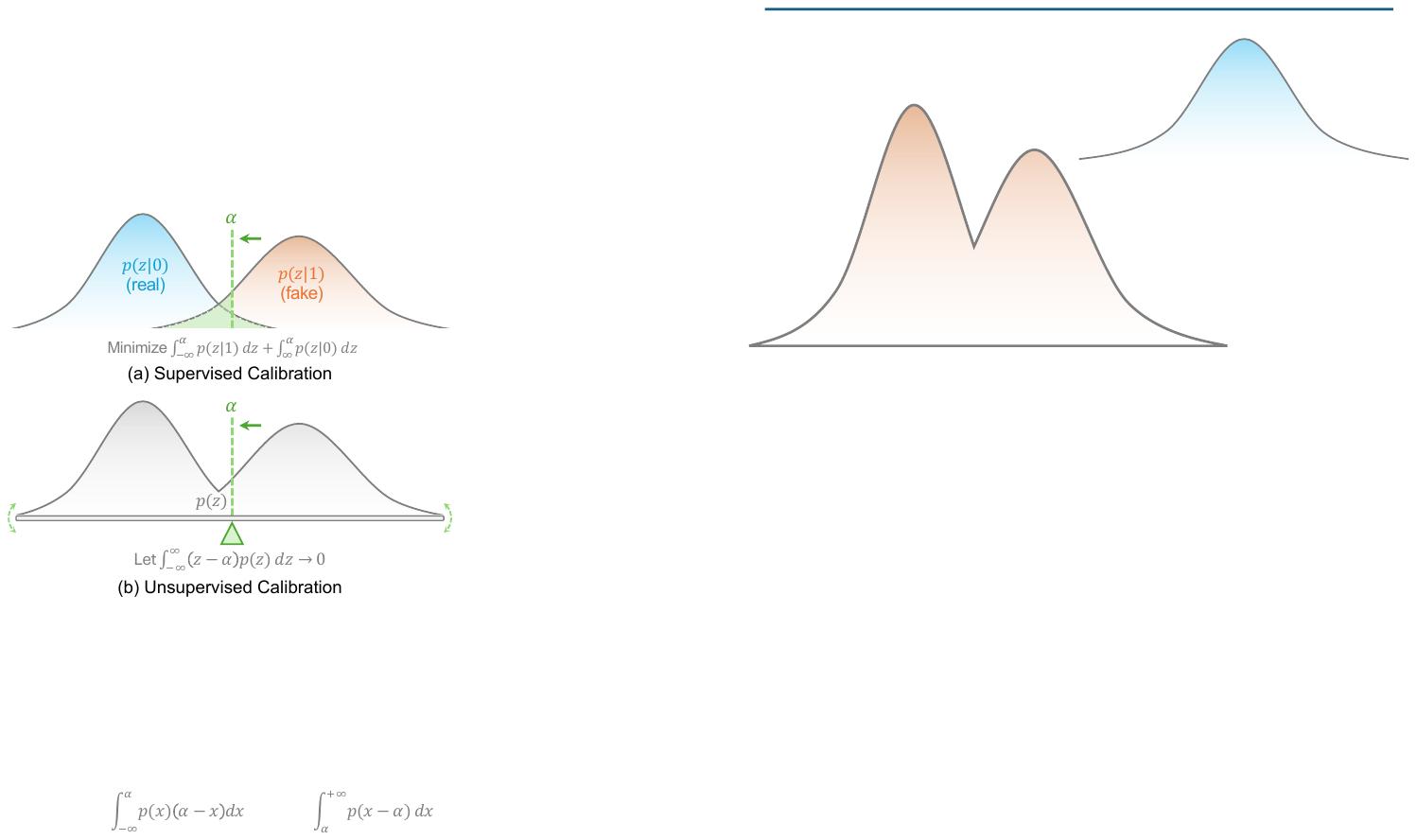}
  \caption{
  Conceptual illustration of our proposed (a) supervised and (b) unsupervised calibration methods, both designed to identify an optimal scalar $\alpha$ that achieves an ideal separation between real and fake distributions, with or without access to ground-truth labels.
  }
  \label{fig:kde}
\end{figure}

\begin{table*}[t]
\centering
\ra{1.0}
\setlength\tabcolsep{5pt} 
\resizebox{\linewidth}{!}
{
\fontsize{13}{13}\selectfont 
\begin{tabular}{@{}lcccccccccccccccccc@{}}
\toprule
Method $\downarrow$  &\rot{ProGAN} &\rot{StyleGAN} & \rot{BigGAN} & \rot{CycleGAN} &\rot{StarGAN} &\rot{GauGAN} &\rot{StyleGAN2} &\rot{WFIR} &\rot{ADM} &\rot{Glide} & \rot{{Midjourney}} & \rot{{SD v1.4}} & \rot{{SD v1.5}}& \rot{{VQDM}}& \rot{{Wukong}}& \rot{{DALLE2}} & {\textbf{Average}} &$\Delta$\\ \midrule

CNNSpot~\shortcite{wang2020cnn} & 100.00 &90.17 &71.17 &87.62 &94.60 & 81.43 &86.91 &91.65 &60.40 &58.06 & 51.39 &50.57 &50.53 &56.45 &51.02 &51.25 &70.83\phantom{\err{0.00}}\cellcolor{lightblue}  \\
+ \textit{Ours (Sup.)} &100.00	&97.95	&83.30	&88.64	&95.02	&86.83	&96.15	&98.30	&67.94	&65.63	&67.48&	64.64	&65.00	&63.12	&59.75	&51.70	&78.22\err{0.12}\cellcolor{lightblue}  &\improve{7.39}\\
+ \textit{Ours (Unsup.)} &100.00	&97.59	&83.25	&88.80	&95.15	&86.70	&95.41	&98.10	&67.42	&65.56	&67.61	&63.98	&64.42	&62.68	&57.90	&51.90	&77.90\err{0.08}\cellcolor{lightblue} &\improve{7.08}\\
\midrule

FreDect~\shortcite{frank2020leveraging} &99.36	&78.03	&81.97	&78.77	&94.62	&80.56	&66.19	&50.75	&63.40	&54.12	&45.87	&38.77	&39.21	&77.80	&40.29	&34.70	&64.03\phantom{\err{0.00}}\cellcolor{lightblue} \\
+ \textit{Ours (Sup.)} &99.70	&77.78	&86.48	&80.77	&97.40	&80.21	&74.92	&58.00	&68.19	&63.19	&45.99	&49.52	&42.43	&78.05	&49.62	&49.75	&68.88{\err{0.48}}\cellcolor{lightblue} &\improve{4.85}\\
+ \textit{Ours (Unsup.)} &99.74	&73.49	&86.10	&81.19	&96.85	&80.45	&74.04	&57.70	&66.55	&61.44	&43.93	&34.66	&34.29	&77.98	&38.32	&34.65	&65.09{\err{0.12}}\cellcolor{lightblue} &\improve{1.06}\\
\midrule

Fusing~\shortcite{ju2022fusing} &99.99	&85.21	&77.35	&87.02	&97.02	&76.92	&83.27	&66.85	&56.53	&57.16	&52.17	&51.05	&51.36	&55.13	&51.72	&52.85	&68.85\phantom{\err{0.00}}\cellcolor{lightblue} &\\
+ \textit{Ours (Sup.)} &99.99	&96.28	&84.62	&89.02	&98.05	&84.36	&96.44	&87.05	&67.53	&69.92	&65.20	&62.42	&63.21	&68.83	&60.11	&61.75	&78.42{\err{0.11}}\cellcolor{lightblue} &\improve{9.57}\\
+ \textit{Ours (Unsup.)} &99.99	&89.58	&81.97	&88.64	&97.52	&80.49	&88.26	&82.80	&65.43	&67.44	&61.88	&59.86	&60.23	&65.97	&58.25	&61.30	&75.60{\err{0.10}}\cellcolor{lightblue} &\improve{6.75}\\
\midrule

LNP~\shortcite{liu2022detecting} &99.78	&92.15	&83.05	&84.60	&99.90	&75.17	&93.87	&55.15	&78.98	&79.67	&57.83	&79.37	&79.24	&69.94	&75.74	&93.05	&81.09\phantom{\err{0.00}}\cellcolor{lightblue} &\\
+ \textit{Ours (Sup.)} &99.81	&94.48	&84.75	&88.00	&100.00	&77.21	&97.25	&91.40	&85.38	&81.89	&79.77	&87.55	&86.86	&80.53	&85.56	&94.30	&88.42{\err{0.12}}\cellcolor{lightblue} &\improve{7.33}\\
+ \textit{Ours (Unsup.)} &99.52	&94.20	&83.65	&86.64	&99.90	&76.88	&96.82	&91.00	&84.14	&78.21	&78.33	&83.41	&83.86	&79.89	&83.86	&94.35	&87.17{\err{0.34}}\cellcolor{lightblue} &\improve{6.07}\\
\midrule

LGrad~\shortcite{tan2023learning} &99.98	&90.47	&88.92	&85.69	&99.62	&82.81	&87.77	&58.50	&66.13	&71.67	&70.47	&65.24	&65.91	&74.51	&60.30	&71.25	&77.45\phantom{\err{0.00}}\cellcolor{lightblue} &\\
+ \textit{Ours (Sup.)} &99.99	&94.23	&90.18	&86.11	&99.85	&86.96	&95.55	&59.70	&66.07	&75.08	&70.23	&66.29	&67.26	&74.31	&62.81	&78.40	&79.56{\err{0.18}}\cellcolor{lightblue} &\improve{2.11}\\
+ \textit{Ours (Unsup.)} &99.95	&94.16	&90.05	&86.11	&99.65	&86.74	&95.61	&58.20	&66.00	&75.24	&70.41	&66.07	&66.83	&74.17	&62.07	&78.35	&79.35{\err{0.08}}\cellcolor{lightblue} &\improve{1.90}\\
\midrule

UnivFD~\shortcite{ojha2023towards}  &99.81	&84.93	&95.08	&98.33	&95.75	&99.47	&74.96	&86.90	&66.87	&62.46	&56.13	&63.66	&63.49	&85.31	&70.93	&50.75	&78.43\phantom{\err{0.00}}\cellcolor{lightblue} &\\
+ \textit{Ours (Sup.)} &99.72	&90.09	&95.70	&98.33	&96.62	&99.48	&92.60	&90.15	&78.38	&76.96	&68.66	&79.83	&79.17	&89.81	&83.08	&63.00	&86.35{\err{0.14}}\cellcolor{lightblue} &\improve{7.92}\\
+ \textit{Ours (Unsup.)}  &99.74	&89.53	&95.33	&98.30	&96.45	&99.50	&92.22	&90.30	&78.38	&76.89	&68.77	&79.59	&79.09	&89.16	&83.11	&62.80	&86.20{\err{0.09}}\cellcolor{lightblue}&\improve{7.77}\\
\midrule

RINE~\shortcite{koutlis2024leveraging} &100.00	&88.86	&99.60	&99.32	&99.55	&99.77	&94.50	&97.35	&74.61	&80.72	&57.12	&83.96	&83.35	&89.79	&84.95	&54.85	&86.77\phantom{\err{0.00}}\cellcolor{lightblue} &\\
+ \textit{Ours (Sup.)} &99.96	&95.73	&99.62	&99.74	&99.82	&99.87	&99.31	&97.15	&89.40	&92.95	&80.48	&93.90	&93.96	&95.66	&93.97	&81.50	&94.56{\err{0.07}}\cellcolor{lightblue} &\improve{7.80}\\
+ \textit{Ours (Unsup.)} &99.99	&93.57	&98.47	&99.74	&97.20	&99.82	&99.01	&95.90	&88.92	&91.35	&80.16	&92.88	&92.75	&95.88	&93.40	&81.60	&93.79{\err{0.22}}\cellcolor{lightblue} &\improve{7.02}\\
\midrule

AIDE~\shortcite{yan2025aide} &99.99	&99.65	&83.95	&98.49	&99.90	&73.24	&98.01	&94.75	&93.47	&95.09	&77.20	&92.95	&92.84	&95.12	&93.49	&96.60	&92.80\phantom{\err{0.00}}\cellcolor{lightblue} &\\
+ \textit{Ours (Sup.)} &99.99	&99.75	&85.55	&98.60	&99.92	&91.01	&99.52	&95.65	&94.20	&95.08	&79.05	&93.14	&93.19	&95.62	&93.80	&96.60	&94.42{\err{0.13}}\cellcolor{lightblue} &\improve{1.62}\\
+ \textit{Ours (Unsup.)} &99.98	&99.75	&85.30	&97.31	&99.77	&82.17	&99.30	&95.60	&94.08	&94.95	&78.64	&93.37	&93.16	&95.54	&93.64	&96.50	&93.69{\err{0.08}}\cellcolor{lightblue} &\improve{0.90}\\
\midrule

Effort~\shortcite{yan2024effort} &99.92	&89.17	&99.12	&99.96	&100.00	&99.94	&90.21	&70.30	&59.18	&70.43	&50.58	&71.62	&71.42	&75.13	&69.34	&53.25	&79.35\phantom{\err{0.00}}\cellcolor{lightblue} &\\
+ \textit{Ours (Sup.)} &99.89	&94.02	&99.45	&100.00	&100.00	&99.94	&96.02	&93.05	&76.78	&86.58	&68.63	&88.88	&88.55	&89.91	&87.17	&62.75	&89.48{\err{0.12}}\cellcolor{lightblue} &\improve{10.13}\\
+ \textit{Ours (Unsup.)} &99.92	&90.22	&99.12	&99.96	&100.00	&99.94	&91.68	&82.20	&71.62	&78.25	&68.62	&79.86	&79.79	&79.10	&78.64	&62.35	&85.08{\err{0.21}}\cellcolor{lightblue} &\improve{5.73}\\

\bottomrule
\end{tabular}
}
\caption{
{Results on AIGCDetectBenchmark~\cite{zhong2023patchcraft}.} 
Accuracies (\%) of different detectors (rows) in detecting real and fake images from different generators (columns) are reported. These methods are trained on real images from LSUN and fake images generated by ProGAN and evaluated over 16 generators.
}
\label{tab:aigcdetect}
\end{table*}

\begin{table*}[t]
\centering
\ra{0.5}
\setlength\tabcolsep{5pt} 
\resizebox{1\linewidth}{!}
{%
\begin{tabular}{@{}lcccccccccc@{}}
\toprule
Method $\downarrow$  &{Midjourney} &{SD v1.4} & {SD v1.5} & {ADM} &{GLIDE} &{Wukong} &{VQDM} &{BigGAN} & {\textbf{Average}} & {$\Delta$}\\ \midrule

CNNSpot~\shortcite{wang2020cnn} &55.13\phantom{\err{0.00}}	&99.96\phantom{\err{0.00}}	&99.85\phantom{\err{0.00}}	&50.11\phantom{\err{0.00}}	&50.47\phantom{\err{0.00}}	&99.28\phantom{\err{0.00}}	&50.18\phantom{\err{0.00}}	&49.98\phantom{\err{0.00}}	&69.37\phantom{\err{0.00}}\cellcolor{lightblue}  \\
+ \textit{Ours (Sup.)} &76.12\err{0.11}	&99.95\err{0.05}	&99.85\err{0.13}	&54.39\err{0.35}	&64.98\err{0.52}	&99.72\err{0.22}	&51.20\err{0.87}	&49.08\err{0.49}	&74.41{\err{0.11}}\cellcolor{lightblue}	&\improve{5.04}\\
+ \textit{Ours (Unsup.)} &75.92\err{0.52}	&99.94\err{0.05}	&99.82\err{0.18}	&49.06\err{1.02}	&64.51\err{0.38}	&99.51\err{0.19}	&40.08\err{1.22}	&38.07\err{0.12}	&70.86{\err{0.21}}\cellcolor{lightblue}	&\improve{1.49}\\
\midrule

FreDect~\shortcite{frank2020leveraging} &61.34\phantom{\err{0.00}}	&99.60\phantom{\err{0.00}}	&99.46\phantom{\err{0.00}}	&50.40\phantom{\err{0.00}}	&54.89\phantom{\err{0.00}}	&97.13\phantom{\err{0.00}}	&49.87\phantom{\err{0.00}}	&50.65\phantom{\err{0.00}}	&70.42\phantom{\err{0.00}}\cellcolor{lightblue} \\
+ \textit{Ours (Sup.)} 					&81.67\err{0.27}	&99.62\err{0.13}	&99.48\err{0.20}	&50.57\err{0.31}	&75.29\err{0.59}	&98.01\err{0.51}	&49.40\err{0.64}	&55.90\err{1.37}	&76.24{\err{0.21}}\cellcolor{lightblue}&	\improve{5.83}\\
+ \textit{Ours (Unsup.)} 				&81.67\err{0.33}	&99.60\err{0.02}	&99.45\err{0.04}	&39.21\err{0.78}	&75.24\err{0.48}	&97.97\err{0.12}	&34.86\err{0.23}	&55.53\err{0.44}	&72.94{\err{0.11}}\cellcolor{lightblue}&	\improve{2.52}\\
\midrule

Fusing~\shortcite{ju2022fusing} 		&58.72\phantom{\err{0.00}}	&99.98\phantom{\err{0.00}}	&99.91\phantom{\err{0.00}}	&57.05\phantom{\err{0.00}}	&73.50\phantom{\err{0.00}}	&99.96\phantom{\err{0.00}}	&64.75\phantom{\err{0.00}}	&55.43\phantom{\err{0.00}}	&76.16\phantom{\err{0.00}}\cellcolor{lightblue}\\
+ \textit{Ours (Sup.)} 					&82.56\err{1.50}	&99.99\err{0.15}	&99.89\err{0.06}	&86.31\err{2.67}	&96.29\err{0.66}	&99.90\err{0.61}	&94.44\err{0.97}	&62.18\err{0.76}	&90.20{\err{0.51}}\cellcolor{lightblue}&	\improve{14.03}\\
+ \textit{Ours (Unsup.)} 				&66.09\err{1.83}	&99.98\err{0.00}	&99.91\err{0.00}	&67.18\err{1.23}	&80.42\err{0.85}	&99.96\err{0.00}	&73.80\err{0.89}	&60.17\err{0.78}	&80.94{\err{0.45}}\cellcolor{lightblue}&	\improve{4.78}\\
\midrule

LNP~\shortcite{liu2022detecting} 		&50.34\phantom{\err{0.00}}	&99.93\phantom{\err{0.00}}	&99.86\phantom{\err{0.00}}	&60.88\phantom{\err{0.00}}	&50.10\phantom{\err{0.00}}	&99.78\phantom{\err{0.00}}	&75.44\phantom{\err{0.00}}	&62.02\phantom{\err{0.00}}	&74.79\phantom{\err{0.00}}\cellcolor{lightblue}\\
+ \textit{Ours (Sup.)} 					&58.34\err{0.93}	&99.91\err{0.05}	&99.85\err{0.06}	&94.75\err{0.18}	&85.69\err{0.12}	&99.77\err{0.11}	&96.38\err{0.13}	&61.95\err{0.32}	&87.08{\err{0.09}}\cellcolor{lightblue}&	\improve{12.29}\\
+ \textit{Ours (Unsup.)} 				&54.44\err{1.56}	&99.97\err{0.03}	&99.83\err{0.03}	&94.35\err{0.57}	&85.83\err{0.06}	&99.78\err{0.06}	&96.37\err{0.22}	&62.00\err{0.64}	&86.57{\err{0.25}}\cellcolor{lightblue}&	\improve{11.78}\\
\midrule

LGrad~\shortcite{tan2023learning} 		&68.70\phantom{\err{0.00}}	&99.40\phantom{\err{0.00}}	&99.41\phantom{\err{0.00}}	&51.90\phantom{\err{0.00}}	&61.57\phantom{\err{0.00}}	&96.97\phantom{\err{0.00}}	&51.04\phantom{\err{0.00}}	&49.73\phantom{\err{0.00}}	&72.34\phantom{\err{0.00}}\cellcolor{lightblue}\\
+ \textit{Ours (Sup.)} 					&81.33\err{0.22}	&99.27\err{0.06}	&99.41\err{0.07}	&54.07\err{1.75}	&76.64\err{0.33}	&97.82\err{0.14}	&56.56\err{0.58}	&52.38\err{1.86}	&77.19{\err{0.43}}\cellcolor{lightblue}&	\improve{4.85}\\
+ \textit{Ours (Unsup.)} 				&80.33\err{0.64}	&99.24\err{0.07}	&99.39\err{0.08}	&51.76\err{0.61}	&76.78\err{0.19}	&97.72\err{0.19}	&56.53\err{0.13}	&46.65\err{0.35}	&76.05{\err{0.13}}\cellcolor{lightblue}&	\improve{3.71}\\
\midrule

UnivFD~\shortcite{ojha2023towards}  	&85.16\phantom{\err{0.00}}	&96.89\phantom{\err{0.00}}	&96.74\phantom{\err{0.00}}	&53.86\phantom{\err{0.00}}	&73.61\phantom{\err{0.00}}	&92.17\phantom{\err{0.00}}	&56.56\phantom{\err{0.00}}	&61.40\phantom{\err{0.00}}	&77.05\phantom{\err{0.00}}\cellcolor{lightblue}\\
+ \textit{Ours (Sup.)} 					&87.82\err{0.37}	&96.89\err{0.38}	&96.67\err{0.20}	&57.67\err{0.92}	&81.23\err{0.07}	&92.67\err{0.26}	&66.94\err{0.66}	&79.05\err{0.43}	&82.37{\err{0.18}}\cellcolor{lightblue}&	\improve{5.32}\\
+ \textit{Ours (Unsup.)}  				&87.71\err{0.24}	&96.77\err{0.38}	&96.49\err{0.52}	&56.62\err{0.12}	&81.16\err{0.17}	&92.52\err{0.37}	&66.85\err{0.06}	&78.97\err{0.20}	&82.14{\err{0.08}}\cellcolor{lightblue}&	\improve{5.09}\\
\midrule

RINE~\shortcite{koutlis2024leveraging} 	&69.38\phantom{\err{0.00}}	&99.97\phantom{\err{0.00}}	&99.90\phantom{\err{0.00}}	&56.73\phantom{\err{0.00}}	&53.24\phantom{\err{0.00}}	&99.95\phantom{\err{0.00}}	&88.95\phantom{\err{0.00}}	&86.15\phantom{\err{0.00}}	&81.78\phantom{\err{0.00}}\cellcolor{lightblue}\\
+ \textit{Ours (Sup.)} 					&96.33\err{0.13}	&99.98\err{0.02}	&99.91\err{0.10}	&92.86\err{0.31}	&97.34\err{0.26}	&99.95\err{0.03}	&98.70\err{0.11}	&98.45\err{0.29}	&97.94{\err{0.08}}\cellcolor{lightblue}&	\improve{16.16}\\
+ \textit{Ours (Unsup.)} 				&96.06\err{0.29}	&99.97\err{0.00}	&99.86\err{0.02}	&92.73\err{0.15}	&95.67\err{0.59}	&99.88\err{0.07}	&98.63\err{0.20}	&97.90\err{0.51}  &97.59{\err{0.12}}\cellcolor{lightblue}&	\improve{15.80}\\
\midrule

AIDE~\shortcite{yan2025aide} 			&79.38\phantom{\err{0.00}}	&99.74\phantom{\err{0.00}}	&99.75\phantom{\err{0.00}}	&78.54\phantom{\err{0.00}}	&91.80\phantom{\err{0.00}}	&98.67\phantom{\err{0.00}}	&80.27\phantom{\err{0.00}}	&77.20\phantom{\err{0.00}}  &88.17\phantom{\err{0.00}}\cellcolor{lightblue}\\
+ \textit{Ours (Sup.)} 					&91.66\err{0.48}	&99.74\err{0.14}	&99.75\err{0.17}	&89.81\err{0.25}	&96.97\err{0.29}	&99.22\err{0.41}	&91.63\err{0.61}	&77.22\err{0.48}	&93.25{\err{0.13}}\cellcolor{lightblue}&	\improve{5.08}\\
+ \textit{Ours (Unsup.)} 				&86.29\err{0.84}	&99.78\err{0.02}	&99.74\err{0.02}	&85.62\err{0.43}	&94.88\err{0.39}	&98.98\err{0.15}	&86.94\err{0.94}	&76.60\err{0.13}	&91.10{\err{0.20}}\cellcolor{lightblue}&	\improve{2.94}\\
\midrule

Effort~\shortcite{yan2024effort} 		&82.40\phantom{\err{0.00}}	&99.83\phantom{\err{0.00}}	&99.81\phantom{\err{0.00}}	&78.78\phantom{\err{0.00}}	&93.31\phantom{\err{0.00}}	&97.42\phantom{\err{0.00}}	&91.70\phantom{\err{0.00}}	&88.05\phantom{\err{0.00}}	&91.41\phantom{\err{0.00}}\cellcolor{lightblue}\\
+ \textit{Ours (Sup.)} 					&94.09\err{0.43}	&99.82\err{0.15}	&99.83\err{0.08}	&96.57\err{0.16}	&98.38\err{0.03}	&98.47\err{0.12}	&97.10\err{0.18}	&88.85\err{0.14}	&96.64{\err{0.09}}\cellcolor{lightblue}&	\improve{5.23}\\
+ \textit{Ours (Unsup.)} 				&93.40\err{0.36}	&99.71\err{0.07}	&99.70\err{0.06}	&96.57\err{0.08}	&98.31\err{0.13}	&98.47\err{0.08}	&96.88\err{0.20}	&87.58\err{0.84}	&96.33{\err{0.16}}\cellcolor{lightblue}&	\improve{4.92}\\

\bottomrule
\end{tabular}
}
\caption{{Results on GenImage~\cite{zhu2023genimage}}. Accuracies (\%) of different detectors (rows) in detecting real and fake images from different generators (columns) are reported. These methods are trained on 
real images from ImageNet and fake images generated by SD v1.4 and evaluated over 8 generators.
}
\label{tab:genimage}
\end{table*}

\begin{table}[t]
\centering
\ra{0.5}
\setlength\tabcolsep{1.5pt} 
\resizebox{1\linewidth}{!}{
\begin{tabular}{@{}lccccc@{}}
    
\toprule
& Original& \multicolumn{2}{c}{{JPEG Compression}}  & \multicolumn{2}{c@{}}{{Gaussian Blur}}  \\
\cmidrule(lr){3-4} \cmidrule(l){5-6}
Method $\downarrow$& {Average} & {$\text{QF}\hspace{-1pt}=\hspace{-1pt}95$} & {$\text{QF}\hspace{-1pt}=\hspace{-1pt}90$} & {$\sigma\hspace{-1pt}=\hspace{-1pt}1.0$} & {$\sigma\hspace{-1pt}=\hspace{-1pt}2.0$} \\ 
\midrule
    
CNNSpot~\shortcite{wang2020cnn}  &70.83\phantom{\smallimprove{0.00}}	&64.43\phantom{\smallimprove{00.00}}	&63.18\phantom{\smallimprove{00.00}}	&70.67\phantom{\smallimprove{0.00}}	&69.64\phantom{\smallimprove{0.00}}	 \\
+ \textit{Ours (Sup.)} &78.22\smallimprove{7.39}	&77.61\smallimprove{13.18}	&77.40\smallimprove{14.22}	&76.79\smallimprove{6.12}	&74.46\smallimprove{4.82}	\\
+ \textit{Ours (Unsup.)}  &77.90\smallimprove{7.08}	&77.21\smallimprove{12.78}	&76.98\smallimprove{13.80}	&76.32\smallimprove{5.65}	&73.83\smallimprove{4.19}	\\
\midrule
            
FreDect~\shortcite{frank2020leveraging}  &64.03\phantom{\smallimprove{0.00}}	&69.16\phantom{\smallimprove{0.00}}	&68.10\phantom{\smallimprove{0.00}}	&65.75\phantom{\smallimprove{0.00}}	&66.53\phantom{\smallimprove{0.00}}	 \\
+ \textit{Ours (Sup.)}  &68.88\smallimprove{4.85}	&77.92\smallimprove{8.76}	&74.37\smallimprove{6.27}	&68.42\smallimprove{2.67}	&70.52\smallimprove{3.99}\\
+ \textit{Ours (Unsup.)} &65.09\smallimprove{1.06}	&76.01\smallimprove{6.85}	&72.93\smallimprove{4.83}	&65.49\smalldecrease{0.26}	&69.62\smallimprove{3.09} \\
\midrule

Fusing~\shortcite{ju2022fusing}  &68.85\phantom{\smallimprove{0.00}}		&61.82\phantom{\smallimprove{0.000}}	&61.04\phantom{\smallimprove{0.000}}	&68.08\phantom{\smallimprove{0.00}}	&66.66\phantom{\smallimprove{0.00}} \\
+ \textit{Ours (Sup.)}  &78.42\smallimprove{9.57}	&77.91\smallimprove{16.09}	&77.22\smallimprove{16.18}	&73.15\smallimprove{5.07}	&70.27\smallimprove{3.61}	 \\
+ \textit{Ours (Unsup.)} &75.60\smallimprove{6.75}	&73.79\smallimprove{11.97}	&73.53\smallimprove{12.49}	&71.82\smallimprove{3.74}	&69.02\smallimprove{2.36}	\\
\midrule 
     
LNP~\shortcite{liu2022detecting}  &81.09\phantom{\smallimprove{0.00}}	&79.37\phantom{\smallimprove{0.00}}	&79.42\phantom{\smallimprove{0.00}}	&70.21\phantom{\smallimprove{0.00}}	&69.23\phantom{\smallimprove{0.00}}	\\
+ \textit{Ours (Sup.)}  &88.42\smallimprove{7.33}	&83.40\smallimprove{4.03}	&83.47\smallimprove{4.05}	&71.77\smallimprove{1.56}	&69.88\smallimprove{0.65}\\
+ \textit{Ours (Unsup.)} &87.17\smallimprove{6.07}	&82.82\smallimprove{3.45}	&82.90\smallimprove{3.48}	&71.43\smallimprove{1.22}	&69.79\smallimprove{0.56}	\\
\midrule 

LGrad~\shortcite{tan2023learning}   &77.45\phantom{\smallimprove{0.00}}	&51.79\phantom{\smallimprove{0.00}}	&50.00\phantom{\smallimprove{0.00}}	&54.20\phantom{\smallimprove{0.00}}	&50.03\phantom{\smallimprove{0.00}}	 \\
+ \textit{Ours (Sup.)}  &79.56\smallimprove{2.11}	&61.47\smallimprove{9.68}	&56.32\smallimprove{6.32}	&61.43\smallimprove{7.23}	&49.98\smalldecrease{0.05} \\
+ \textit{Ours (Unsup.)}  &79.35\smallimprove{1.90}	&61.13\smallimprove{9.34}	&55.46\smallimprove{5.46}	&61.24\smallimprove{7.04}	&49.53\smalldecrease{0.50}	 \\
\midrule 

UnivFD~\shortcite{ojha2023towards}  &78.43\phantom{\smallimprove{0.00}}	&74.10\phantom{\smallimprove{0.00}}	&71.65\phantom{\smallimprove{0.00}}	&70.31\phantom{\smallimprove{0.00}}	&65.66\phantom{\smallimprove{0.00}}	 \\
+ \textit{Ours (Sup.)}   &86.35\smallimprove{7.92}	&82.17\smallimprove{8.07}	&80.78\smallimprove{9.13}	&77.38\smallimprove{7.07}	&70.04\smallimprove{4.38} \\
+ \textit{Ours (Unsup.)}  &86.20\smallimprove{7.77}	&82.08\smallimprove{7.98}	&80.59\smallimprove{8.94}	&77.23\smallimprove{6.92}	&69.59\smallimprove{3.93}\\
\midrule 

RINE~\shortcite{koutlis2024leveraging} &86.77\phantom{\smallimprove{0.00}}	&78.94\phantom{\smallimprove{0.000}}	&76.19\phantom{\smallimprove{0.000}}	&81.33\phantom{\smallimprove{0.00}}	&73.81\phantom{\smallimprove{0.00}} \\
+ \textit{Ours (Sup.)}  &94.56\smallimprove{7.80}	&90.93\smallimprove{11.99}	&89.19\smallimprove{13.00}	&86.17\smallimprove{4.84}	&76.53\smallimprove{2.72} \\
+ \textit{Ours (Unsup.)}  &93.79\smallimprove{7.02}	&90.34\smallimprove{11.40}	&88.63\smallimprove{12.44}	&85.61\smallimprove{4.28}	&75.54\smallimprove{1.73}	\\
\midrule
   
AIDE~\shortcite{yan2025aide} 
&92.80\phantom{\smallimprove{0.00}}	&65.91\phantom{\smallimprove{0.000}}	&60.22\phantom{\smallimprove{0.000}}	&79.48\phantom{\smallimprove{0.00}}	&68.43\phantom{\smallimprove{0.00}} \\
+ \textit{Ours (Sup.)}  &94.42\smallimprove{1.62}	&79.41\smallimprove{13.50}	&75.61\smallimprove{15.39}	&82.49\smallimprove{3.01}	&73.04\smallimprove{4.61} \\
+ \textit{Ours (Unsup.)}  &93.69\smallimprove{0.90}	&77.90\smallimprove{11.99}	&73.84\smallimprove{13.62}	&81.83\smallimprove{2.35}	&70.30\smallimprove{1.87} \\
\midrule

Effort~\shortcite{yan2024effort}  &79.35\phantom{\smallimprove{0.000}}	&67.37\phantom{\smallimprove{0.000}}	&63.56\phantom{\smallimprove{0.000}}	&75.30\phantom{\smallimprove{0.00}}	&62.52\phantom{\smallimprove{0.00}} \\
+ \textit{Ours (Sup.)}  &89.48\smallimprove{10.13}	&77.59\smallimprove{10.22}	&74.72\smallimprove{11.16}	&83.47\smallimprove{8.17}	&70.04\smallimprove{7.52} \\
+ \textit{Ours (Unsup.)}  &85.08\smallimprove{5.73\phantom{0}}	&75.30\smallimprove{7.93\phantom{0}}	&71.67\smallimprove{8.11\phantom{0}}	&81.11\smallimprove{5.81}	&67.89\smallimprove{5.37}\\

\bottomrule
    \end{tabular}
    }
\caption{{Robustness on JPEG compression and Gaussian blur.} We report the accuracies (\%) averaged over 16 test sets in AIGCDetectBenchmark~\cite{zhong2023patchcraft} with different quality factor (QF) and variance ($\sigma$).}
\label{tab:robustness}
\end{table}

\subsubsection{Unsupervised Calibration.}

In fully unsupervised settings where no labeled target data is available, we propose a calibration method that leverages the intrinsic structure of the logit distribution to recover an effective decision boundary. Our approach builds upon a fundamental assumption in binary classification: when the underlying data distribution is separable, the classifier logits tend to form a bimodal distribution, implicitly reflecting the presence of two latent modes corresponding to the two classes. 
An ideal decision threshold should hence lie near the valley between these modes and induce a symmetric partitioning of the distribution.

Formally, given a set of unlabelled target logits $\{z_i\}_{i=1}^n$, we begin by estimating the continuous probability density function $p(z)$ via Gaussian kernel density estimation:
\begin{align}
    p(z) = \frac{1}{n h} \sum_{i=1}^n K\left(\frac{z - z_i}{h}\right)\,,
\end{align}
where $K(\cdot)$ is the Gaussian kernel and $h$ is the bandwidth, whose details are introduced in Appendix.

Our goal is to find a calibration value $\alpha$ that aligns with the distributional symmetry of $p(z)$. To do this, we introduce a symmetry surrogate objective by treating $\alpha$ as a reference point and minimizing the first-order weighted shift of the distribution with respect to $\alpha$: 
\begin{align}
    \Phi(\alpha) = \int_{-\infty}^\infty (z - \alpha) \cdot p(z) \, dz\,.
\end{align}
This expression quantifies the imbalance of the logit distribution with respect to $\alpha$ as a reflective center. Intuitively, when $\Phi(\alpha) = 0$, the distribution $p(z)$ is symmetric around $\alpha$, which serves as a natural partition boundary.
Notably, we propose a moment-balancing optimization method to ensure robust unsupervised performance across distributions by adaptively penalizing low-confident logits, whose details are deferred to Appendix.

This method requires no label supervision and is robust to mild distributional shift, as it operates directly on the empirical logit geometry. It is particularly effective when the logits exhibit a latent bimodal structure, enabling recovery of semantically meaningful decision thresholds even under unsupervised conditions.

\section{Experiment}\label{sec:exp}

\subsection{Experimental Setup}

\subsubsection{Baselines.}
We evaluate nine representative off-the-shelf AI-generated image (AIGI) detectors, including CNNSpot~\cite{wang2020cnn}, FreDect~\cite{frank2020leveraging}, Fusing~\cite{ju2022fusing}, LNP~\cite{liu2022detecting}, LGrad~\cite{tan2023learning}, UnivFD~\cite{ojha2023towards}, RINE~\cite{koutlis2024leveraging}, AIDE~\cite{yan2025aide}, and
Effort~\cite{yan2024effort}.
Rather than directly comparing with these methods, we apply our calibration strategy on top of them to assess its applicability and effectiveness in enhancing detection performance.

\subsubsection{Benchmarks.}
We use two popular benchmarks in our main experiments.
(1) \textit{AIGCDetectBenchmark}~\cite{zhong2023patchcraft} covers 16 generative models, including GANs and text-to-image models such as Midjourney and DALL·E 2. 
The training set contains real images from LSUN and fake images generated by ProGAN, while the test set includes diverse fake images from the 16 generative models. 
Likewise, (2) \textit{GenImage}~\cite{zhu2023genimage} comprises ImageNet's 1,000 classes generated using 8 SOTA generators such as Stable Diffusion (SD) and Midjourney, and its training set contains real images from ImageNet and fake images generated by SD v1.4.
Benchmark details are in Appendix, 
where we also conduct experiments on other recent benchmarks.

\subsubsection{Implementation Details.}
We randomly sample a small subset of the test data (referred to as \textit{validation set}) to optimize the scalar calibration parameter~$\alpha$.
By default, we use 100 images, which is approximately 1\% of the entire test set.
The optimized $\alpha$ is then applied to the test data for evaluation.
All experiments are conducted on a single NVIDIA RTX A6000 GPU.
For each method, we use the official pretrained checkpoints without any fine-tuning or modification. If no checkpoint is available, we train the model using the official codebase.
To ensure statistical robustness, all results are averaged over 10 independent runs, and we report both the mean accuracies and standard deviations.

\subsection{Results and Analysis}\label{sec:results}

\subsubsection{Results on Standard Benchmarks.}

\cref{tab:aigcdetect,tab:genimage} illustrate the performance of representative AIGI detectors on the AIGCDetectBenchmark and GenImage datasets, respectively. Our proposed post-hoc calibration strategy consistently improves detection accuracies across different baselines. In particular, especially when the baseline models are equipped with strong pre-trained feature extractors such as CLIP (\eg, RINE~\cite{koutlis2024leveraging} and Effort~\cite{yan2024effort}), our method is able to further enhance their performance by larger margins, indicating its complementary benefit. Overall, these results demonstrate that our calibration approach can effectively unlock the latent potential of existing detectors, regardless of the backbone representation quality, and is especially beneficial in handling domain shift and subtle generation artifacts commonly present in AIGI datasets.

\subsubsection{Robustness to Image Perturbations.}

\cref{tab:robustness} shows the results under different types of image perturbations, evaluating the robustness of AIGI detectors in real-world applications under potential unseen perturbations.
Notably, our method shows strong resistance to image perturbations, especially for JPEG compression, significantly improving the performance by large margins, \eg, +15.39\% accuracy for AIDE under JPEG compression ($\text{QF}=90$). 
These results further validate the robustness and applicability of our method in real-world scenarios.

\begin{figure}[t]
  \centering
  \includegraphics[width=0.998\linewidth]{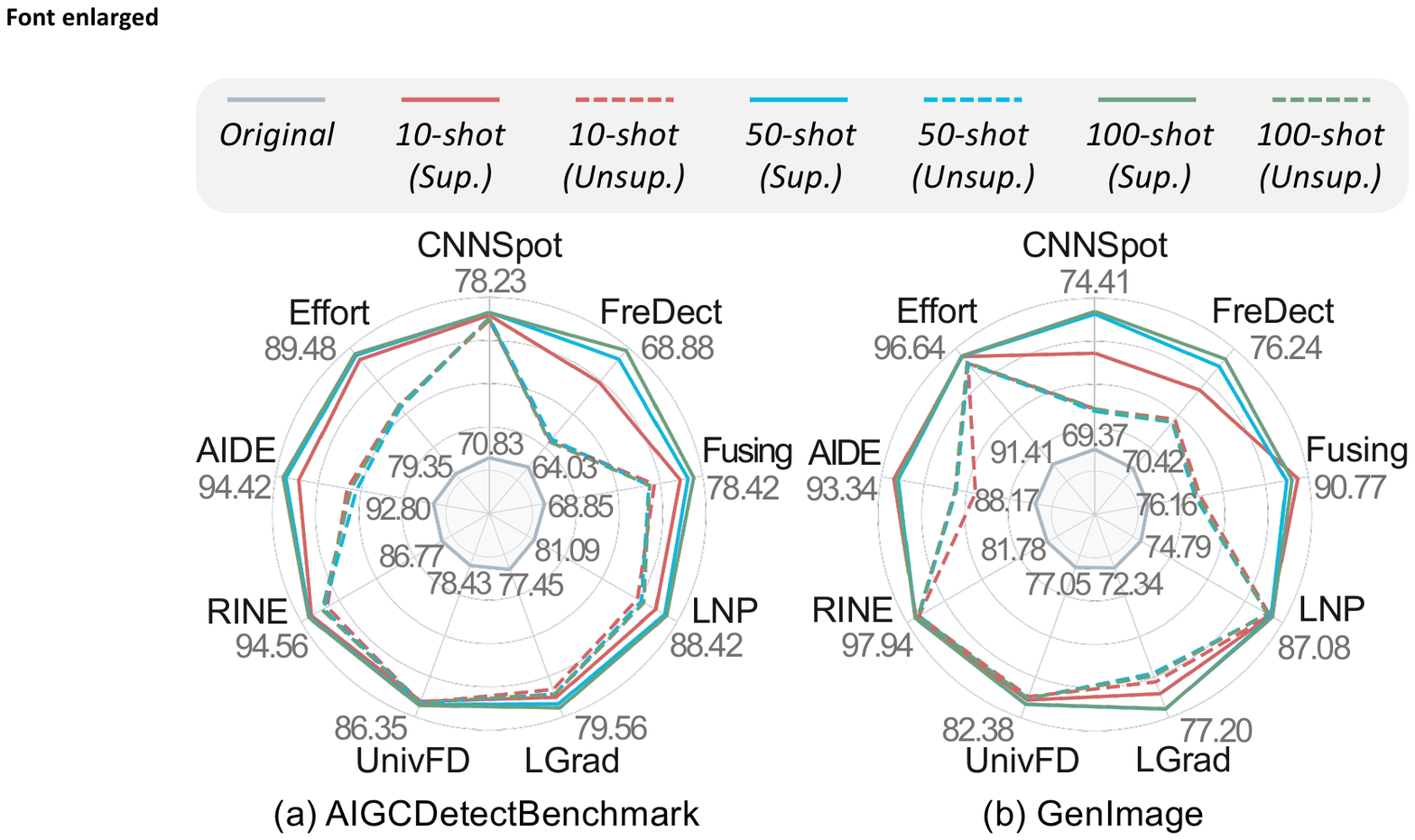}
  \caption{{Effect of validation set size on the proposed supervised and unsupervised calibration methods.
  We report the average accuracies on the two benchmarks.}
  }
  \label{fig:shots}
\end{figure}

\subsubsection{Effect of Validation Set Size.}

\cref{fig:shots} shows the performance using different validation set sizes.
We can observe that both our supervised and unsupervised calibration methods perform stably with varying amounts of validation data, demonstrating strong performance with as few as 10 samples, which is less than 0.1\% of the data in each test set.
These results further confirm the effectiveness of our method, demonstrating its utility as a lightweight and practical test-time enhancement for AIGI detection.

\subsubsection{Comparison with Different Estimation Methods for $\alpha$.}\label{sec:different_methods}

\cref{fig:different_methods} further showcases the performance using different supervised searching methods to estimate $\alpha$ in place of our proposed KDE-based method, such as binary search and training with binary cross-entropy (BCE) loss, whose details are deferred to Appendix.
We can observe that our proposed method shows strong performance even with 4 target samples in the validation set, and consistently outperforms the other two counterparts by large margins with more target samples.
These results demonstrate the data efficiency of our proposed calibration method, highlighting its practicality and suitability for real-world deployment.

\begin{figure}[t]
  \centering
  \includegraphics[width=0.998\linewidth]{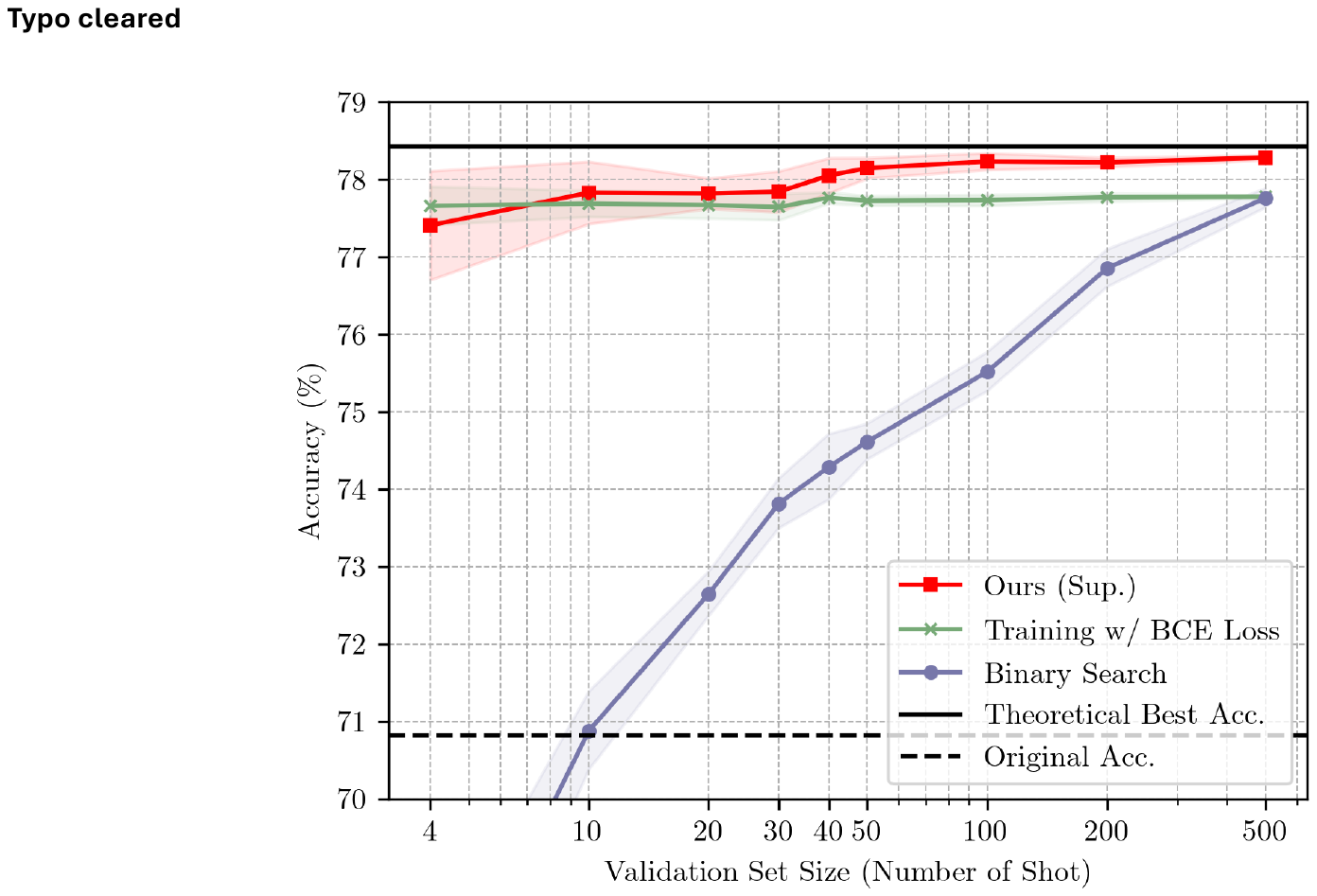}
  \caption{Performance comparison of different supervised calibration methods for estimating $\alpha$. 
  Average accuracies of CNNSpot~\cite{wang2020cnn} on AIGCDetectBenchmark~\cite{zhong2023patchcraft} are reported.
  }
  \label{fig:different_methods}
\end{figure}
\section{Conclusion}

In this work, we investigated a critical yet underexplored challenge in AI-generated image detection: the systematic misclassification of fake images under distribution shift, even in class-balanced settings. Through empirical and theoretical analyses, we identified that this phenomenon arises from a mismatch between the model's learned decision threshold and the shifted test-time distribution, particularly due to label prior and class-conditional input shifts in the fake class. To address this, we introduced a lightweight and principled post-hoc calibration method that applies a learnable scalar correction to the model's logits. Without modifying the detector's backbone, our method significantly improves detection robustness across diverse generators. Our findings underscore the importance of aligning the decision boundary with the test distribution, and demonstrate that substantial performance gains can be achieved through minimal yet targeted calibration. This highlights the untapped potential of existing detectors and opens new directions for reliable and adaptive AI-generated image detection under realistic deployment conditions.

\section*{Acknowledgments}
This research work is supported by the Agency for Science, Technology and Research (A*STAR) under its MTC Programmatic Funds (Grant No.~M23L7b0021),
and the National Natural Science Foundation of China under Grant 62571412.
The authors sincerely thank the anonymous reviewers for their valuable comments.

\bibliography{sections/bib}


\renewcommand{\thesection}{\Alph{section}}
\renewcommand{\thetable}{\Alph{table}}
\renewcommand{\thefigure}{\Alph{figure}}
\renewcommand{\theequation}{\Alph{equation}}

\setcounter{table}{0}
\setcounter{figure}{0}
\setcounter{equation}{0}

\newpage
\appendix
{

\setcounter{tocdepth}{2} 
\tableofcontents


  
  
}
\vspace{3.6pt}

\section{Experimental Details}\label{sec:experiment_details}

\subsection{Benchmark Details}\label{sec:bechmark_details}

\cref{table:dataset_aigcdetect,table:dataset_genimage} present detailed information on the two benchmarks used in our study: AIGCDetectBenchmark~\cite{zhong2023patchcraft} and GenImage~\cite{zhu2023genimage}.

\textit{AIGCDetectBenchmark}: This benchmark is trained on ProGAN and evaluated on 16 diverse test sets, covering both GAN-generated and Stable Diffusion-generated images.

\textit{GenImage}: This benchmark is trained on Stable Diffusion V1.4 and tested primarily on data generated by Stable Diffusion, with only a small subset of GAN-generated images. Notably, the Stable Diffusion-related test sets in AIGCDetectBenchmark are consistent with those used in GenImage.

\subsection{AI-Generated Image Detector Details}\label{sec:method_details}
We evaluate a range of representative AI-generated image detection methods as baselines, encompassing diverse detection paradigms.

\begin{table}[t]
\resizebox{\hsize}{!}{
\renewcommand{\arraystretch}{1.1}
\begin{tabular}{@{}l lcrl@{}}
\toprule
& {Generator}    & {Image Size} & {Number} & {Source}                                   \\ \midrule
{Train}                  & ProGAN \shortcite{karras2017progressive}   & $256 \times 256$    & 360.0k          & LSUN \shortcite{yu2015lsun}           \\ \midrule
\multirow{16}{*}{{Test}} & ProGAN \shortcite{karras2017progressive}   & $256 \times 256$    & 8.0k            & LSUN \shortcite{yu2015lsun}           \\
& StyleGAN \shortcite{karras2019style}       & $256 \times 256$    & 12.0k           & LSUN \shortcite{yu2015lsun}           \\
& BigGAN \shortcite{brock2018large}          & $256 \times 256$    & 4.0k            & ImageNet \shortcite{deng2009imagenet} \\
& CycleGAN \shortcite{zhu2017unpaired}       & $256 \times 256$    & 2.6k            & ImageNet \shortcite{deng2009imagenet} \\
& StarGAN \shortcite{choi2018stargan}        & $256 \times 256$    & 4.0k            & CelebA \shortcite{liu2015deep}        \\
& GauGAN \shortcite{park2019semantic}        & $256 \times 256$    & 10.0k           & COCO \shortcite{lin2014microsoft}     \\
& StyleGAN2 \shortcite{karras2020analyzing}  & $256 \times 256$    & 15.9k           & LSUN \shortcite{yu2015lsun}           \\
& WFIR \shortcite{WFIR}                      & $1024 \times 1024$  & 2.0k            & FFHQ \shortcite{karras2019style}      \\
& ADM \shortcite{dhariwal2021diffusion}      & $256 \times 256$    & 12.0k           & ImageNet \shortcite{deng2009imagenet} \\
& Glide \shortcite{nichol2021glide}          & $256 \times 256$    & 12.0k           & ImageNet \shortcite{deng2009imagenet} \\
& Midjourney \shortcite{midjourney}          & $1024 \times 1024$  & 12.0k           & ImageNet \shortcite{deng2009imagenet} \\
& SD v1.4 \shortcite{StableDiffusion}        & $512 \times 512$    & 12.0k           & ImageNet \shortcite{deng2009imagenet} \\
& SD v1.5 \shortcite{StableDiffusion}        & $512 \times 512$    & 16.0k           & ImageNet \shortcite{deng2009imagenet} \\
& VQDM \shortcite{gu2022vector}              & $256 \times 256$    & 12.0k           & ImageNet \shortcite{deng2009imagenet} \\
& Wukong \shortcite{wukong}                  & $512 \times 512$    & 12.0k           & ImageNet \shortcite{deng2009imagenet} \\
& DALLE 2 \shortcite{ramesh2022hierarchical} & $256 \times 256$    & 2.0k            & ImageNet \shortcite{deng2009imagenet} \\ \bottomrule
\end{tabular}

}
\caption{{Statistics of the AIGCDetectBenchmark.} ``SD'' and ``WFIR'' denote Stable Diffusion and WhichFaceIsReal, respectively. The ``Number'' column reports only the count of fake images; an equal number of real images from the same source is included for each generative model.}
\label{table:dataset_aigcdetect}
\end{table}
\begin{table}[t]

\resizebox{\hsize}{!}{
\renewcommand{\arraystretch}{1.1}
\begin{tabular}{@{}l lcrl@{}}
\toprule
& {Generator}  & {Image Size}  & {Number}  & {Source}                                                    \\ \midrule
{Train}                  & SD v1.4 \shortcite{StableDiffusion}                    & $512 \times 512$                    & 324.0k                 & ImageNet \shortcite{deng2009imagenet}                  \\ \midrule
\multirow{8}{*}{{Test}} & BigGAN \shortcite{brock2018large}     & $256 \times 256$   & 12.0k & ImageNet \shortcite{deng2009imagenet} \\
& ADM \shortcite{dhariwal2021diffusion} & $256 \times 256$   & 12.0k & ImageNet \shortcite{deng2009imagenet} \\
& Glide \shortcite{nichol2021glide}     & $256 \times 256$   & 12.0k & ImageNet \shortcite{deng2009imagenet} \\
& Midjourney \shortcite{midjourney}     & $1024 \times 1024$ & 12.0k & ImageNet \shortcite{deng2009imagenet} \\
& SD v1.4 \shortcite{StableDiffusion}   & $512 \times 512$   & 12.0k & ImageNet \shortcite{deng2009imagenet} \\
& SD v1.5 \shortcite{StableDiffusion}   & $512 \times 512$   & 16.0k & ImageNet \shortcite{deng2009imagenet} \\
& VQDM \shortcite{gu2022vector}         & $256 \times 256$   & 12.0k & ImageNet \shortcite{deng2009imagenet} \\
& Wukong \shortcite{wukong}             & $512 \times 512$   & 12.0k & ImageNet \shortcite{deng2009imagenet} \\ \bottomrule
\end{tabular}
}
\caption{{Statistics of the GenImage Benchmark.} ``SD'' stands for Stable Diffusion. 
The ``Number'' column reports only the count of
fake images; an equal number of real images from the same
source is included for each generator.
}
\label{table:dataset_genimage}
\end{table}

\begin{itemize}

    \item \textbf{CNNSpot}~\cite{wang2020cnn} demonstrates that a standard CNN classifier, when trained with basic data augmentations (e.g., JPEG compression and Gaussian blur), can generalize effectively to images generated by unseen GAN models.

    \item \textbf{FreDect}~\cite{frank2020leveraging} leverages frequency domain artifacts that are characteristic of GAN-generated images and builds a detector based on these spectral inconsistencies.

    \item \textbf{Fusing}~\cite{ju2022fusing} proposes a dual-branch architecture that integrates global spatial context with local discriminative features to enhance detection robustness.

    \item \textbf{LNP}~\cite{liu2022detecting} designs a learnable denoising network to extract noise patterns from images, which are then used as features for training a forgery classifier.

    \item \textbf{LGrad}~\cite{tan2023learning} utilizes the gradients of a pretrained CNN model as generalized representations to capture subtle artifacts introduced during image generation.

    \item \textbf{UnivFD}~\cite{ojha2023towards} adopts CLIP-derived features and trains a binary linear classifier, aiming to detect forgeries based on high-level semantic cues.

    \item \textbf{RINE}~\cite{koutlis2024leveraging} proposes a lightweight method that leverages intermediate features from CLIP and a learnable module to construct a forgery-aware representation space.
    \item \textbf{AIDE}~\cite{yan2025aide} introduces a challenging benchmark (Chameleon) to reassess the limits of AI-generated image detection and proposes AIDE, a hybrid-feature detector combining semantic and artifact-level cues.
    \item \textbf{Effort}~\cite{yan2024effort} separates forgery and semantic features via SVD to improve generalization in AI-generated image detection.

\end{itemize}

\subsection{Implementation Details}\label{sec:implementation_details}

We provide detailed optimization procedures for our proposed supervised and unsupervised calibration methods in Sec.~4. Additionally, we include the implementation details of the two supervised baselines used for comparison in Fig.~4, namely, \textit{training with BCE loss} and \textit{binary search}.

\subsubsection{Optimization for Supervised Calibration.}

To solve the supervised calibration problem in Eq.~(11), we apply a scalar bounded optimization routine (\eg, Brent’s method) over the observed logit range. 
This can be simply achieved by minimizing the classification error using common optimization tools (\eg, \texttt{minimize\_scalar} method from \texttt{scipy.optimize}):
\begin{align}
    \alpha^\star = \argmin_\alpha \left(\int_{-\infty}^\alpha p_1(z) \, dz - \int_{\alpha}^\infty p_0(z) \, dz\right)\,.
\end{align}
Running on a consumer-grade CPU (AMD EPYC 7313 3.0GHz, single-threaded), this optimization completes in an average of 17 iterations for 100-shot calibration, taking only about 0.00086 seconds, demonstrating excellent efficiency.

\subsubsection{Optimization for Unsupervised Calibration.}

Now we discuss how to estimate $\alpha^\star = \arg\min_\alpha |\Phi(\alpha)|$ in Eq.~(13).
In fact, the distributional symmetry principle introduced in Eq.~(13) admits a closed-form solution under mild smoothness assumptions. Specifically, by interpreting the threshold estimation as a moment equilibrium problem over the estimated logit density, we can directly derive an analytic expression for the optimal threshold.

To recap, let $p(z)$ denote the probability density function over the model logits $z \in \mathbb{R}$, estimated via kernel density estimation (KDE). Our goal is to find the threshold $\alpha$ such that the \textit{first moment} of the shifted density $p(z)$ centered at $\alpha$ vanishes:
\begin{align}
\Phi(\alpha) = \int (z - \alpha) \cdot p(z) \, dz = 0\,.
\end{align}
Rewriting and solving this constraint yields:
\begin{align}
\alpha^\star = \int z \cdot p(z) \, dz = \mathbb{E}_{z \sim p}[z]\,,
\end{align}
which states that the optimal threshold is simply the expected logit under the estimated distribution.

In practice, given a discretized support $\{z_j\}$ and the corresponding estimated KDE values $\{p_j = \hat{p}(z_j)\}$, we compute the threshold as
\begin{align}
\alpha^\star = \frac{\sum_j p_j \cdot z_j}{\sum_j p_j}\,.
\end{align}
This closed-form expression avoids any iterative optimization and can be interpreted as the \emph{center of mass} of the logit distribution reflected across the origin. It provides a fast and fully unsupervised mechanism to obtain a decision threshold, and empirically performs competitively when the logits exhibit a latent bimodal or skewed structure.
We show in experiments that our moment-balancing design ensures robust unsupervised performance across distributions by adaptively penalizing low-confident logits.

We also evaluated the optimization time, following the same procedure used for the supervised calibration method. 
In the 100-shot setting, the optimization completes in just 0.00041 seconds, demonstrating its extreme computational efficiency.

\subsubsection{Details of Training with BCE Loss.}

We also propose a supervised calibration method that learns an optimal logit offset by directly minimizing the binary cross-entropy (BCE) loss between calibrated logits and ground-truth labels. Specifically, we introduce a trainable scalar offset $\alpha$ such that the calibrated logits are given by $z - \alpha$, where $z$ denotes the original logits. The objective is to solve:
\begin{align}
\alpha^\star = \argmin_{\alpha} \, \mathcal{L}_{\mathrm{BCE}}(z - \alpha, y)\,,
\end{align}
where $y \in \{0, 1\}$ denotes the binary label and $\mathcal{L}_{\mathrm{BCE}}(\cdot, \cdot)$ is the standard sigmoid cross-entropy loss.

To provide a stable initialization for $\alpha$, we estimate the medians of the logits from each class and set the initial state as follows:
\begin{align}
\alpha_0 = -\frac{1}{2}\big(\mathrm{median}(z \mid y{=}0) + \mathrm{median}(z \mid y{=}1)\big)\,.
\end{align}
This reflects a symmetric midpoint between the two class distributions and empirically accelerates convergence.

We optimize $\alpha$ using the AdamW optimizer with a linear decay learning rate schedule. For the 100-shot setting, the entire optimization process converges in an average of 1K iterations, taking approximately 0.38 seconds on an NVIDIA RTX A6000 GPU.

\begin{table*}[t]
\centering
\resizebox{\linewidth}{!}{
\renewcommand{\arraystretch}{1.0} 
\setlength\tabcolsep{4pt} 
\begin{tabular}{@{}lcccccccccc@{}}
\toprule
\phantom{Dataset}              
& {CNNSpot}~\shortcite{wang2020cnn} 
& {FreDect}~\shortcite{frank2020leveraging}
& {Fusing}~\shortcite{ju2022fusing} 
& {LNP}~\shortcite{liu2022detecting}
& {LGrad}~\shortcite{tan2023learning}
& {UnivFD}~\shortcite{ojha2023towards} 
& {RINE}~\shortcite{koutlis2024leveraging}
& {AIDE}~\shortcite{yan2025aide}
& {Effort}~\shortcite{yan2024effort}   \\ \midrule
{Original}  &57.91\phantom{\err{0.00}}   & 57.90\phantom{\err{0.00}}  & 58.03\phantom{\err{0.00}}  &  57.32\phantom{\err{0.00}}  &  58.09\phantom{\err{0.00}} & 59.11\phantom{\err{0.00}}   & 58.54\phantom{\err{0.00}} &  63.06\phantom{\err{0.00}}  &   63.15\phantom{\err{0.00}}  \\

+ Ours \textit{(Sup.)}    & 67.15\err{1.15}   & 63.52\err{2.17}   & 59.18\err{0.43} & 59.10\err{0.50}  & 59.54\err{1.43} & 67.28\err{0.87}  & 58.83\err{2.07} & 67.08\err{0.17} & 67.38\err{0.89} \\
+ Ours \textit{(Unsup.)}  & 67.17\err{0.40}   & 63.48\err{0.36}   & 59.16\err{0.19} & 51.94\err{1.26}  & 54.05\err{0.97} & 67.44\err{0.49}  & 57.03\err{0.43} & 67.27\err{0.16} & 67.99\err{0.14} \\

\bottomrule
\end{tabular}
}
\caption{{Comparison on the {{Chameleon}}~\cite{yan2025aide} benchmark.} 
We report the accuracies (\%) of various detectors (columns) in distinguishing real from fake images, showing both their original performance and the performance after applying our method. All detectors are trained on real images from ImageNet and fake images generated using SD v1.4.
}
\label{tab:chameleon}
\end{table*}

\subsubsection{Details of Binary Search.}

Given a set of binary labels $\{y_i\}_{i=1}^n \in \{0,1\}$ and their corresponding logits $\{z_i\}_{i=1}^n \in \mathbb{R}$, the goal is to find an optimal classification threshold $\alpha^\star$ such that the classification accuracy is maximized. That is,
\begin{align}
\alpha^\star = \argmax_\alpha\left\{ \frac{1}{n} \sum_{i=1}^n \mathds{1} \left( z_i - \alpha > 0 \right) = y_i \right\}\,,
\end{align}
where $\mathds{1}(\cdot)$ is the indicator function.

To efficiently search for $\alpha^\star$, a binary search algorithm is employed within a predefined interval $[L, R]$, initialized by the minimum and maximum values of the logits. In each iteration, we evaluate the accuracy at both endpoints and recursively shrink the interval towards the region yielding higher classification accuracy, until the interval width falls below a small tolerance $\varepsilon$:
\begin{equation}
\begin{aligned}
    \begin{cases}
   R \leftarrow \frac{L+R}{2}\,, & \text{if } \mathrm{Acc}(\alpha_L) > \mathrm{Acc}(\alpha_R)\,; \\
    L \leftarrow \frac{L+R}{2}\,, & \text{otherwise}\,.
    \end{cases}
\end{aligned}
\end{equation}
The final estimate is $\alpha^\star \approx \frac{L+R}{2}$. 
For the 100-shot setting, the entire optimization process converges in an average of 7 iterations, taking approximately 0.0043 seconds on the aforementioned CPU device.

\begin{figure*}[ht!]
    \centering
    \includegraphics[width=0.77\linewidth]{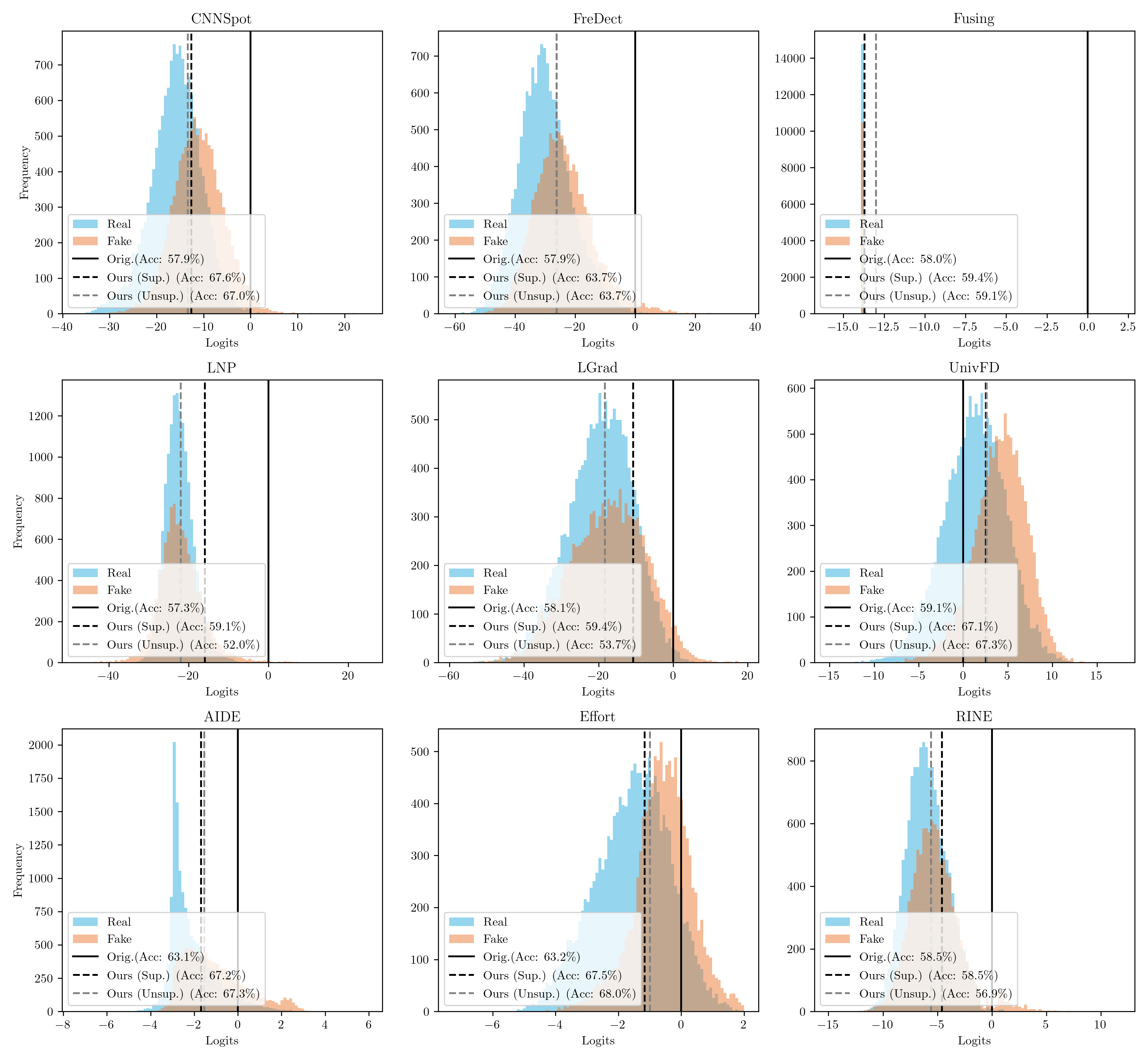}
    \caption{
    Logit distributions of the nine AI-generated image detectors on the Chameleon~\cite{yan2025aide} benchmark.
    All detectors are trained on real images from ImageNet and fake images generated using SD v1.4.
    We present both the original decision threshold and the adjusted threshold obtained through our calibration method.
    }
    \label{fig:chameleon}
\end{figure*}

\section{Additional Results and Analysis}\label{sec:additional_result}

\subsection{Results on Chameleon Benchmark}\label{sec:result_chameleon}

We also evaluate our method on the recently proposed Chameleon benchmark~\cite{yan2025aide}, which includes 11K high-fidelity AI-generated images alongside nearly 15K real-world photographs. Notably, all fake images in Chameleon have successfully passed a human ``Turing Test'', meaning human annotators consistently misidentified them as real. This makes Chameleon an exceptionally challenging benchmark that reflects the current frontier of image generation techniques.

As shown in \cref{tab:chameleon}, our proposed calibration method performs well in most cases, achieving up to a 10\% improvement in accuracy. However, a few failure cases remain after applying the unsupervised calibration.
To better understand this effect, we visualize the logit distributions produced by each AI-generated image detector, along with the updated decision thresholds resulting from our supervised and unsupervised calibration methods, respectively.
As illustrated in \cref{fig:chameleon}, the real and fake logit distributions produced by all nine detectors exhibit significant overlap. In these scenarios, simple scalar threshold adjustments may no longer suffice, underscoring the need for more robust detection approaches that can produce highly discriminative features. 
Visualizations also show that domain shifts differ across detectors trained on the same data, suggesting shifts are shaped by both data and model.
Addressing these challenges and adapting to the fast-evolving landscape of AI-generated content remains an important direction for future work.

\subsection{Additional Qualitative Results}\label{sec:additional_qualitative}

\begin{figure*}[t!]
    \centering
    \includegraphics[width=0.99\linewidth]{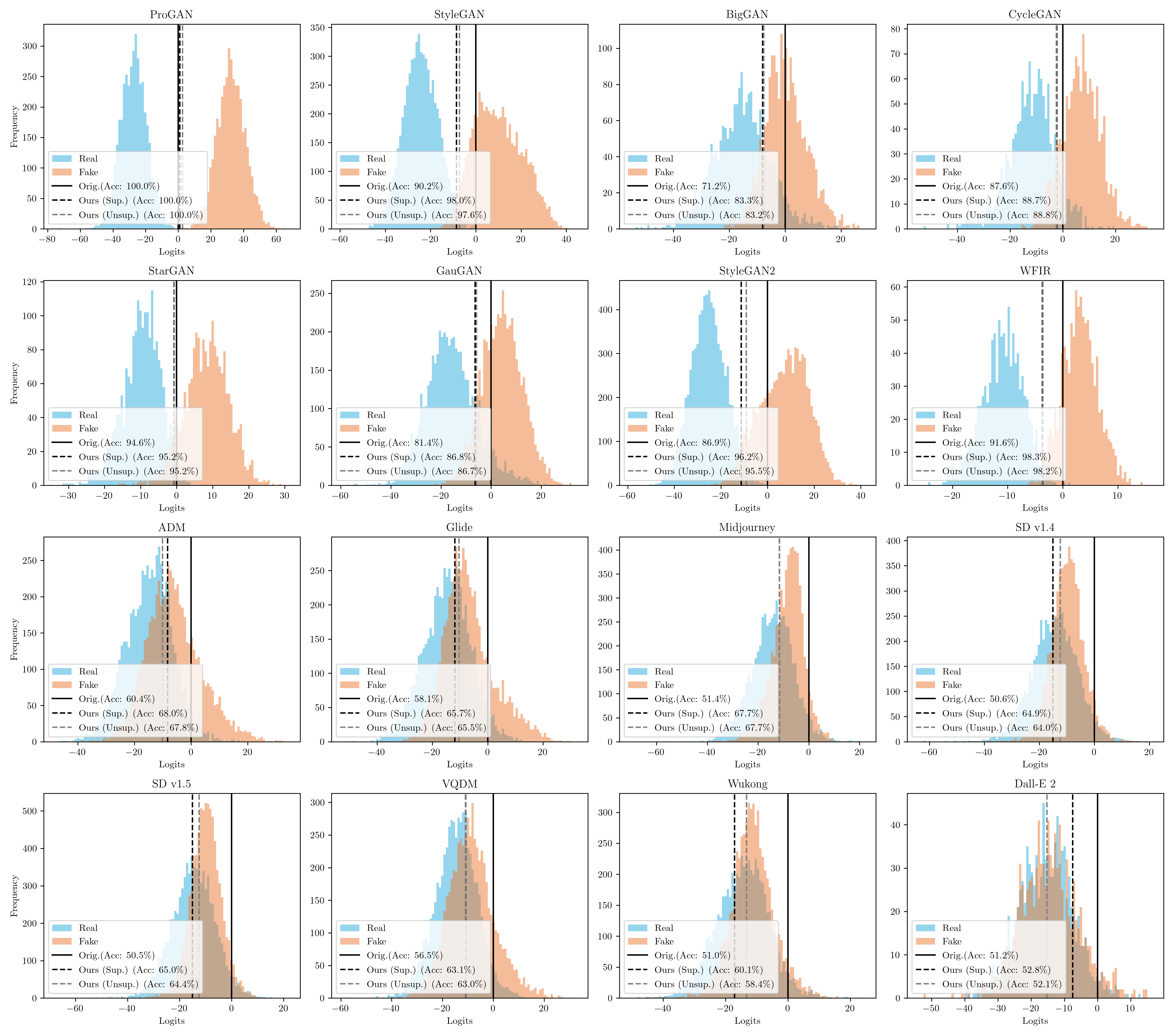}
    \caption{
    Logit distributions of the CNNSpot~\cite{wang2020cnn} detector, pretrained on ProGAN-generated fakes and evaluated on previously unseen fake images from all sixteen test sets in AIGCDetectBenchmark~\cite{zhong2023patchcraft}.
    We present both the original decision threshold and the adjusted threshold obtained through our calibration method.
    }
    \label{fig:cnnspot_full}
\end{figure*}

\begin{figure*}[t!]
    \centering
    \vspace{-18pt}
    \includegraphics[width=0.91\linewidth]{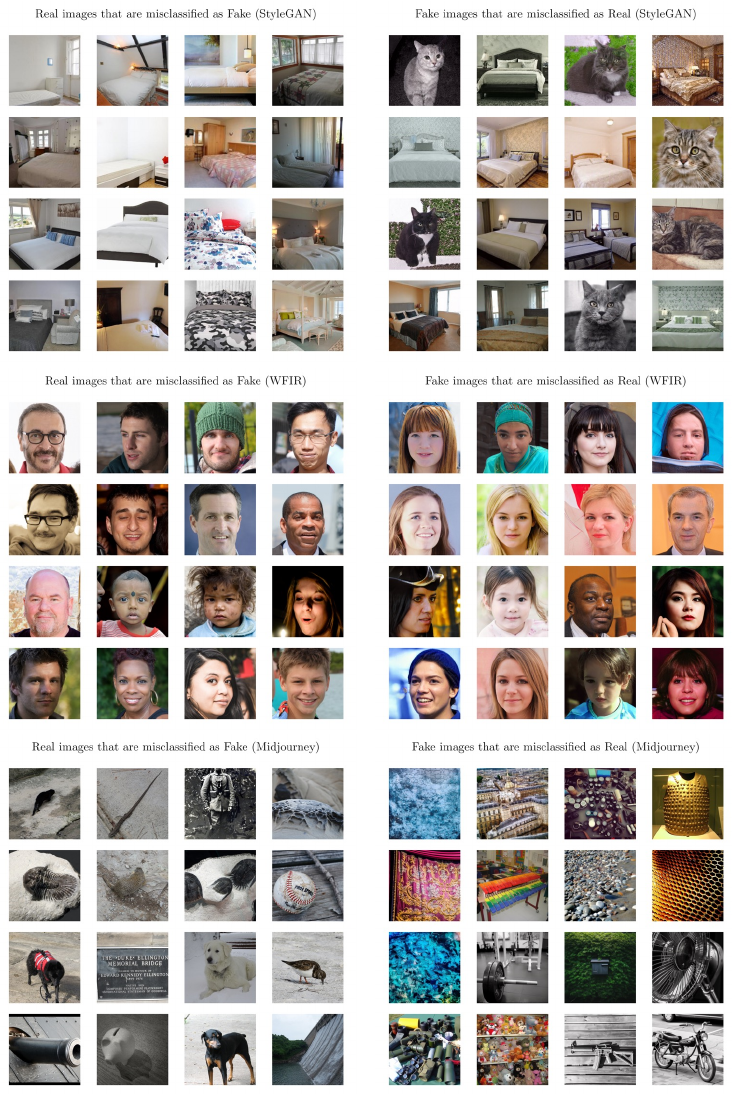}
    \caption{
    Examples of the most ambiguous real and fake images, identified as those with the highest likelihood of being misclassified. These results are obtained using CNNSpot~\cite{wang2020cnn} pretrained on ProGAN-generated fake images.
    }
    \label{fig:misclassified}
\end{figure*}

In \cref{fig:cnnspot_full}, we present the logit distributions across all 16 test sets from AIGCDetectBenchmark~\cite{zhong2023patchcraft}. Except for ProGAN images (seen during training), the detector frequently misclassifies fake images as real across the other 15 unseen test sets. Our proposed method effectively addresses this issue by calibrating the decision threshold with minimal computational overhead. Notably, the detector used in this illustration was trained on GAN-generated images and represents an older model from five years ago, resulting in suboptimal logit separation for diffusion-generated images, where real and fake logits heavily overlap. This observation underscores the growing gap between advancements in AI-generated content (AIGC) and the corresponding progress in AIGC detection. As a promising direction for future work, we aim to develop more robust detection methods that yield highly discriminative and generalizable features across diverse, unseen fake image sources.

\cref{fig:misclassified} presents some of the most ambiguous real and fake images that are frequently misclassified by the AI-generated image detector. Several patterns emerge from these examples. Misclassified fake images often exhibit higher visual complexity, with intricate patterns and rich backgrounds---characteristics commonly seen in Midjourney-generated images. In contrast, real images that are easily misclassified tend to be simpler or visually incoherent, such as the minimalistic scenes from StyleGAN or the unusual facial features in WFIR. These observations offer valuable insight into how AI-generated image detectors distinguish between real and fake images, potentially guiding the design of more robust models in future research.

\end{document}